\newtheorem{theorem}{Theorem}
\newtheorem{lemma}{Lemma}
\newtheorem{proposition}{Proposition}
\newtheorem{remark}{Remark}
\DeclareMathOperator{\Tr}{Tr}
\DeclareMathOperator*{\argmin}{arg\,min}
\title{Autocorrelation Matters: Understanding the Role of Initialization Schemes for State Space Models}
\author
{
Fusheng Liu\\
National University of Singapore\\
{\tt fusheng@u.nus.edu} 
\and
Qianxiao Li\\
National University of Singapore\\
{\tt qianxiao@nus.edu.sg}
}
\date{}
\begin{document}
\maketitle

\begin{abstract}
Current methods for initializing state space model (SSM) parameters primarily rely on the HiPPO framework \citep{gu2023how}, which is based on online function approximation with the SSM kernel basis. 
However, the HiPPO framework does not explicitly account for the effects of the temporal structures of input sequences on the optimization of SSMs.
In this paper, we take a further step to investigate the roles of SSM initialization schemes by considering the autocorrelation of input sequences. 
Specifically, we: (1) rigorously characterize the dependency of the SSM timescale on sequence length based on sequence autocorrelation; (2) find that with a proper timescale, allowing a zero real part for the eigenvalues of the SSM state matrix mitigates the curse of memory while still maintaining stability at initialization; (3) show that the imaginary part of the eigenvalues of the SSM state matrix determines the conditioning of SSM optimization problems, and uncover an approximation-estimation tradeoff when training SSMs with a specific class of target functions.
\end{abstract}

\section{Introduction}

The state space model (SSM) is a sequence model that has recently shown great potential in long sequence modeling across various applications, including computer vision \citep{zhu2024vision, liu2024vmamba}, time series forecasting \citep{NEURIPS2018_5cf68969, zhang2023effectively} and natural language processing \citep{gu2023mamba, dao2024transformers}.
In mathematics, a SSM layer is defined by a continuous-time ordinary differential equation $h'(t) = W h(t) + B x(t)$, $y(t) = C h(t) + D x(t)$, where $W, B, C, D$ are trainable parameters, $x(t)$ is the input sequence, and $y(t)$ is the output sequence.
For discrete input sequences, a timescale 
$\Delta > 0$ will be introduced as a hyperparameter to discretize the model.
Different from the attention mechanism \citep{vaswani2017attention}, SSMs are recurrent-based architectures that treat the input sequence token by token, yet can achieve first-order time complexity on the sequence length through parallelization \citep{gu2022efficiently}. 
There are two well known issues for training recurrent-based architectures, the vanishing and the exploding gradient problems \citep{pmlr-v28-pascanu13}.
By introducing complex-valued initialization schemes, proper parameterization methods and regularization techniques, recent works demonstrate that SSMs can achieve performance comparable to attention-based architectures in terms of both computational cost and sample efficiency \citep{gu2023mamba, dao2024transformers, zhu2024vision, yu2024there, wang2024stablessm, liu2024from, yu2024there, bick2024transformers, hwang2024hydra, wang2024mamba, waleffe2024empirical}.
However, the theoretical understanding on the roles of the initialization schemes is still lacking and needs to further explored.
In this paper, we particularly look into the timescale $\Delta$ and the state matrix $W$, and we aim to study the following  
fundamental question
\begin{center}
    \begin{tcolorbox}[enhanced,center upper,drop shadow southwest]
    \textit{\textbf{
    Given a sequential dataset with length $L$, how should the timescale $\Delta$ depend on $L$ and what is the role of $W$ on training SSMs?}}
\end{tcolorbox}
\end{center}
Based on the analysis of continuous-time SSMs, previous works \citep{gu2022efficiently, gu2022s4d, gu2023how} propose the HiPPO framework where $W, B$ are initialized such that the SSM basis kernels $\{e_n^\top e^{Wt} B\}_{n=1}^\infty$ are orthogonal in $L^2[0, \infty)$ with some measure $\omega(t)$, and the timescale $\Delta$ scales as $1/L$ to capture long range dependencies of sequences with length $L$. 
Common HiPPO-based initialization methods such as S4D-Legs and S4D-Lin typically presume that the measure $\omega(t)$ is exponential decay and the discrete input sequences $x$ have a inherent timescale $\Delta$ that is shared with the model. 
However, these assumptions are restrictive because exponential decay measures weaken the effects of temporal dependencies in input sequences, and in practice, we usually lack prior information about the data's timescale.
To address this concern, we take an initial step towards understanding the relationship between the autocorrelation of input sequences and the SSM initialization schemes.
Specifically, we focus on the diagonal SSM\footnote{To simplify the analysis, we omit the skip connection by letting $D = 0$.} \citep{gu2022s4d} where the state matrix $W$ is a complex-valued diagonal matrix.
By studying the stability condition for given input sequences $x \in \mathbb{R}^L$, we find that the connection of the timescale $\Delta$ and the sequence length $L$ is highly related with the spectrum of the data autocorrelation matrix $\mathbb{E}[x x^\top]$.
Different temporal dependencies in the input sequences can cause significant variations in the spectrum of the autocorrelation matrix. For example, when $x$ is sampled from a standard normal distribution, $x$ has zero temporal dependencies, and the autocorrelation matrix becomes an identity matrix. On the other hand, if $x$ consists of constant values, the input sequence exhibits full temporal dependencies, and the autocorrelation matrix is low rank.
For the state matrix $W$, our stability analysis shows that even with a zero real part, i.e. $\Re(W) = 0$, the diagonal SSM can still be stable at initialization if $\Delta$ is properly set.
One benefit for setting the real part to zero is that the learned SSM kernel functions at initialization do not exponentially decay, which helps to mitigate the curse of memory \citep{li2022approximation}.
Our convergence analysis indicates that the imaginary part $\Im(W)$ plays a crucial role in the convergence rate and explains the benefits for complex-valued SSMs compared to real-valued SSMs in terms of the optimization.
In particular, the more separated the imaginary parts $\Im(w)$ are, the faster the convergence.
When considering both approximation and optimization, we characterize an approximation-estimation tradeoff when the target function has closely spaced dominant frequencies. 
Then well separated $\Im(w)$ values lead to fast convergence, while achieving a good approximation requires close imaginary parts.

To summarize, the main goal of this paper is to provide a theoretical understanding on the effects of three specific hyperparameters: the model timescale $\Delta$, the real part of $\Re(W)$, and the imaginary part of the state vector $\Im(W)$.
These components are connected as a \textit{data-dependent} initialization scheme for SSMs.
First, for any given sequential dataset, we can estimate its autocorrelation. 
Using this information, we can apply Theorem \ref{thm: mag output value} to initialize $\Delta$, taking into account both data autocorrelation and sequence length.
Second, if the true input-output mapping is represented by an underlying linear functional, often referred to as a memory function, that exhibits a long memory pattern, our second theory, detailed in Section \ref{sec: zero real part}, suggests that initializing with a zero real part can help mitigate the challenges posed by long sequences.
Finally, the third theory introduced in Section \ref{sec: imag part} discusses an approximation-estimation tradeoff that arises when the true memory function $\rho^*$ features closely spaced frequencies. 
If we can accurately recover $\rho^*$ from the sequential data, we can then initialize the imaginary part based on the dominant frequencies of $\rho^*$, thereby finding an optimal balance informed by theoretical insights.
Accordingly, our contributions are as follows:
\begin{itemize}
    \item 
    In section \ref{sec: delta and L}, we characterize the dependency between the timescale $\Delta$ and the sequence length $L$ by taking into account the autocorrelation of the input sequences.
    Even if the eigenvalues of the state matrix $W$ have zero real part, the stability condition on the magnitude of the output value at initialization can still hold with an appropriate setting of $\Delta$. 
    \item 
    In section \ref{sec: zero real part},
    we show that the real part of the eigenvalues of the state matrix $W$ determines the decay rate of the SSM kernel functions. Allowing the eigenvalues of $W$ to have zero real part at initialization can significantly increase the model's effective memory and help mitigate the curse of memory for fixed-length tasks that require long-term memory.
    \item 
    In section \ref{sec: imag part},
    we prove that the conditioning of SSM optimization problems is determined by the separation distance of the imaginary parts of the eigenvalues of the state matrix. Well-separated imaginary parts induce faster convergence, whereas closely spaced ones lead to slower convergence.
    This explains the benefits of complex-valued SSMs over real-valued SSMs.
    Furthermore, it uncovers an approximation-estimation tradeoff when the target function has close dominant frequencies in the frequency domain.
\end{itemize}

\section{Related Works}

\textbf{Optimization of SSMs.}
Recurrent-based architectures are known for two issues: training stability and computational cost \citep{pmlr-v28-pascanu13}.
To mitigate these challenges and capture long range dependencies more effectively in sequence modeling, the S4 model was introduced with novel parameterization, initialization, and discretization techniques \citep{gu2022efficiently}. Recent updates to the S4 model have further simplified the hidden state matrix by using a diagonal matrix, thereby improving computational efficiency \citep{gu2022s4d, gupta2022diagonal, orvieto2023resurrecting}. Additionally, regularization methods such as dropout, weight decay, and data-dependent regularizers \citep{liu2024from} are employed with SSMs to prevent overfitting.
In this study, we explore how temporal dependencies in input sequences impact initialization schemes in terms of optimization, with a particular focus on the timescale and state matrix.

\textbf{Curse of memory in SSMs.}
The ``curse of memory" is a newly introduced concept that highlights the difficulty recurrent-based models face in capturing long-term memory \citep{li2021on, li2022approximation}, and has been discussed in recent works \citep{cirone2024theoretical, sieber2024understanding, zucchet2024recurrent}.
This issue arises due to the exponential decay property of the model's kernel basis functions. 
A common strategy to parameterize the real part of the state matrix's eigenvalues involves stable parameterization \citep{gu2022s4d, wang2024stablessm}, ensuring stable training dynamics even if the input sequence is infinitely long. However, this stable parameterization constrains the real part of the state matrix's eigenvalues to be strictly negative, thereby limiting the model's ability to capture long-term memory. 
In this paper, we argue that if input sequences have fixed lengths, it is reasonable to set the real part of the eigenvalues to zero by appropriately setting the timescale. 
This relaxation allows the model to capture long-term memory while still maintaining training stability.

\section{Preliminaries}

In this section, we briefly introduce the diagonal SSM and the problem setting we consider throughout this paper.
Specifically, we consider the following single-input single-output (SISO) diagonal-SSM built in the complex number field $\mathbb{C}$ and then cast into the real number field $\mathbb{R}$,
\begin{equation}\label{eq: ssm}
    \frac{d}{dt} h(t) = W h(t) + b x(t), \quad y(t) = \Re(c^\top h(t)), \quad t \geq 0,
\end{equation}
where $\Re(\cdot)$ represents the real part;
$x(t)$ is input sequence from an input space\footnote{A linear space of continuous functions from $\mathbb{R}_{\geq 0}$ to $\mathbb{R}$ that vanishes at
infinity.} $\mathcal{X} := C_0(\mathbb{R}_{\geq 0}, \mathbb{R})$;
$y(t) \in \mathbb{R}$ is the output sequence at time $t$;
$h(t) \in \mathbb{C}^m$ is the hidden state with $h(0) = 0$;
$W \in \mathbb{C}^{m \times m}, b, c \in \mathbb{C}^{m}$ are trainable parameters.
In particular, the state matrix $W = \text{diag}(w_1,\ldots,w_m)$ is a diagonal matrix.
To simplify the analysis, we omit the skip connection matrix $D$.
Following the training setup in \cite{gu2022s4d}, the read-out vector $c$ follows standard normal distribution and the read-in vector $b$ in (\ref{eq: ssm}) is fixed as an all-one vector at initialization without training.
Under these settings, the input-output relation in (\ref{eq: ssm}) is explicitly given by the integral
\begin{equation}\label{eq: continuous ssm}
    y(t) = \int_0^t \Re(c^\top e^{w s}) x(t-s) ds,
\end{equation}
where $w \in \mathbb{C}^m$ is the state vector that contains all the diagonal entries of the state matrix $W$, and the function $\Re(c^\top e^{ws})$ is called the memory function or the kernel function.

\textbf{Discretization.}
To handle discrete sequences, we follow \citep{gu2022s4d} to use the zero-order (ZOH) hold method for discretization.
Then given a timescale $\Delta > 0$ and any discrete sequence $(x_0,\ldots,x_{L-1}) \subset \mathbb{R}$ with length $L$, 
for $\ell = 1, 2,\ldots,L$,
the ZOH method induces a model output
\begin{equation}\label{eq: zoh ssm}
    y_{\ell} = \Re\left(\sum_{j=1}^m \frac{e^{\Delta w_j}-1}{w_j} c_j e^{\Delta w_j (\ell-1)}\right) x_0 + \cdots + \Re\left(\sum_{j=1}^m \frac{e^{\Delta w_j}-1}{w_j} c_j e^{\Delta w_j 0}\right) x_{\ell-1}.
\end{equation}

\begin{remark}
Here we focus on the SISO model with ZOH discretization.
It is also possible to extend to the multiple-input multiple-output (MIMO) case, by noticing that the MIMO output is  essentially a linear combination of several single-input single-output (SISO) models.
As a result, we can extend our results to the MIMO scenario by examining each SISO model individually along with its respective input-output mapping. 
However, our theory are not directly applicable to other discretization methods (e.g. bilinear method), which involve different matrix forms for the model's input-output mapping (see Appendix \ref{sec: proof of thm mat output value}), and require different techniques to yield theoretical insights. 
\end{remark}

In the following section, we tackle the problems related to the initialization schemes of SSMs that were introduced in the Introduction. Specifically, we will explore the following questions:
\begin{enumerate}
    \item 
    \textbf{Timescale Initialization}:
    How should we initialize the model timescale $\Delta$ for fixed-length tasks to enhance the training of SSMs?
    Is the previously used scaling $\Delta = 1/L$ a universal approach?
    \item 
    \textbf{Real Part of the State Vector}:
    What role does $\Re(w)$ play? 
    Can we initialize $\Re(w)$ to be zero, and what benefits might arise from a zero real part?
    \item 
    \textbf{Imaginary Part of the State Vector}:
    What role does $\Im(w)$ play?
    What advantages do complex-valued SSMs offer compared to real-valued SSMs?
\end{enumerate}
By probing these questions, we aim to deepen our understanding of effective initialization practices for SSMs, thereby improving their training performance.

\section{Main Results}\label{sec: main results}

In this section, we present our main results by focusing on three initialization parameters $\Delta, \Re(w)$ and $\Im(w)$ respectively.
Specifically, in section \ref{sec: delta and L}, we rigorously characterize the relationship between the timescale $\Delta$ and the sequence length $L$ in terms of training stability at initialization by taking into account data autocorrelation.
In section \ref{sec: zero real part}, we demonstrate that allowing the state vector's real part to be zero can prevent exponential decay in the SSM kernel function, thereby mitigating the curse of memory in certain scenarios.
In section \ref{sec: imag part}, we explore the relationship between the convergence rate and the separation distance of the state vector's imaginary part. In particular, we uncover an approximation-estimation tradeoff for a class of target functions.

\subsection{Relationship between $\Delta$ and $L$}\label{sec: delta and L}

\begin{figure}
    \centering
    \begin{minipage}{0.32\textwidth}
        \centering
        \includegraphics[width=\textwidth]{ 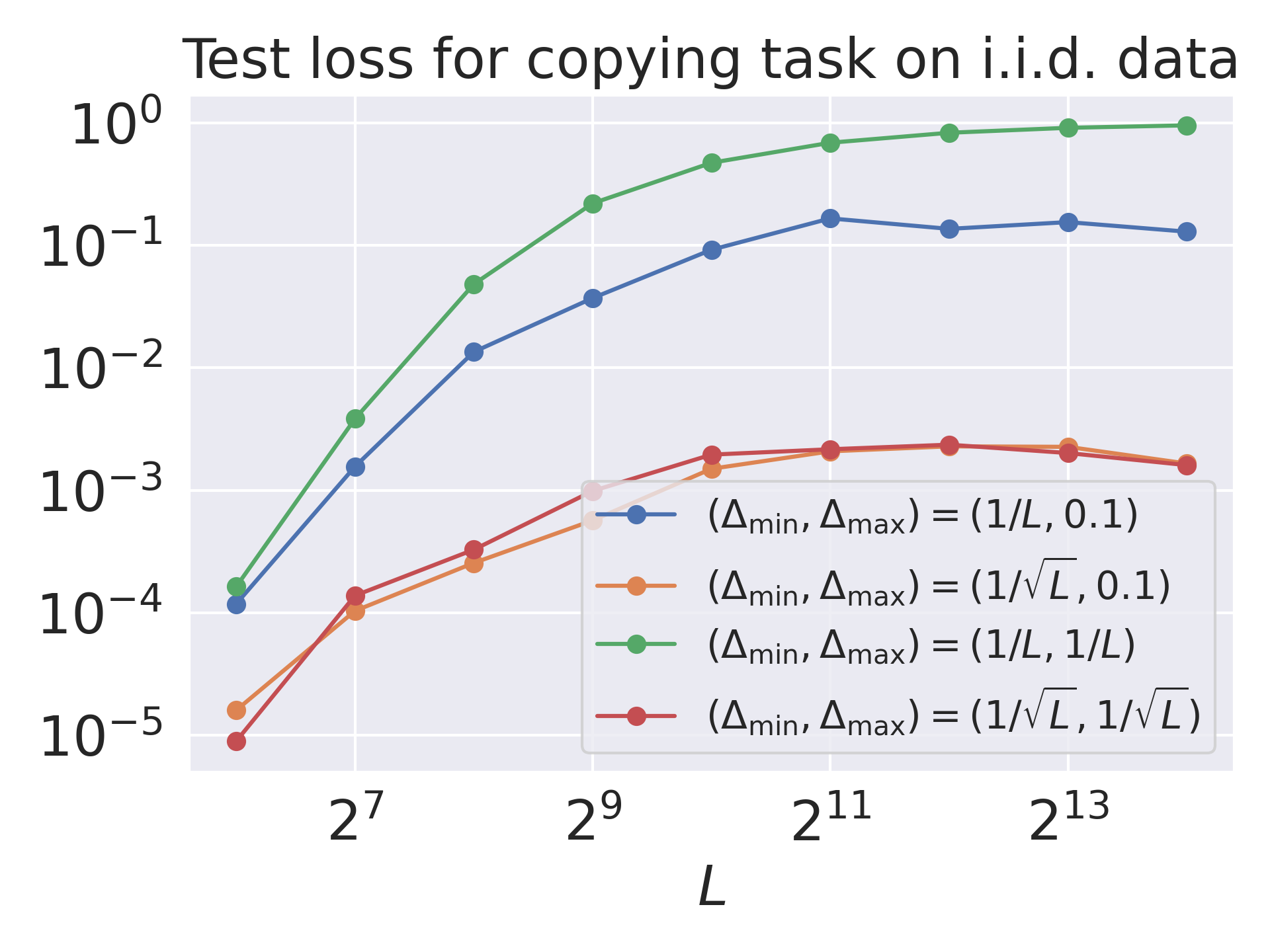}
    \end{minipage}
    \hfill
    \begin{minipage}{0.32\textwidth}
        \centering
        \includegraphics[width=\textwidth]{ 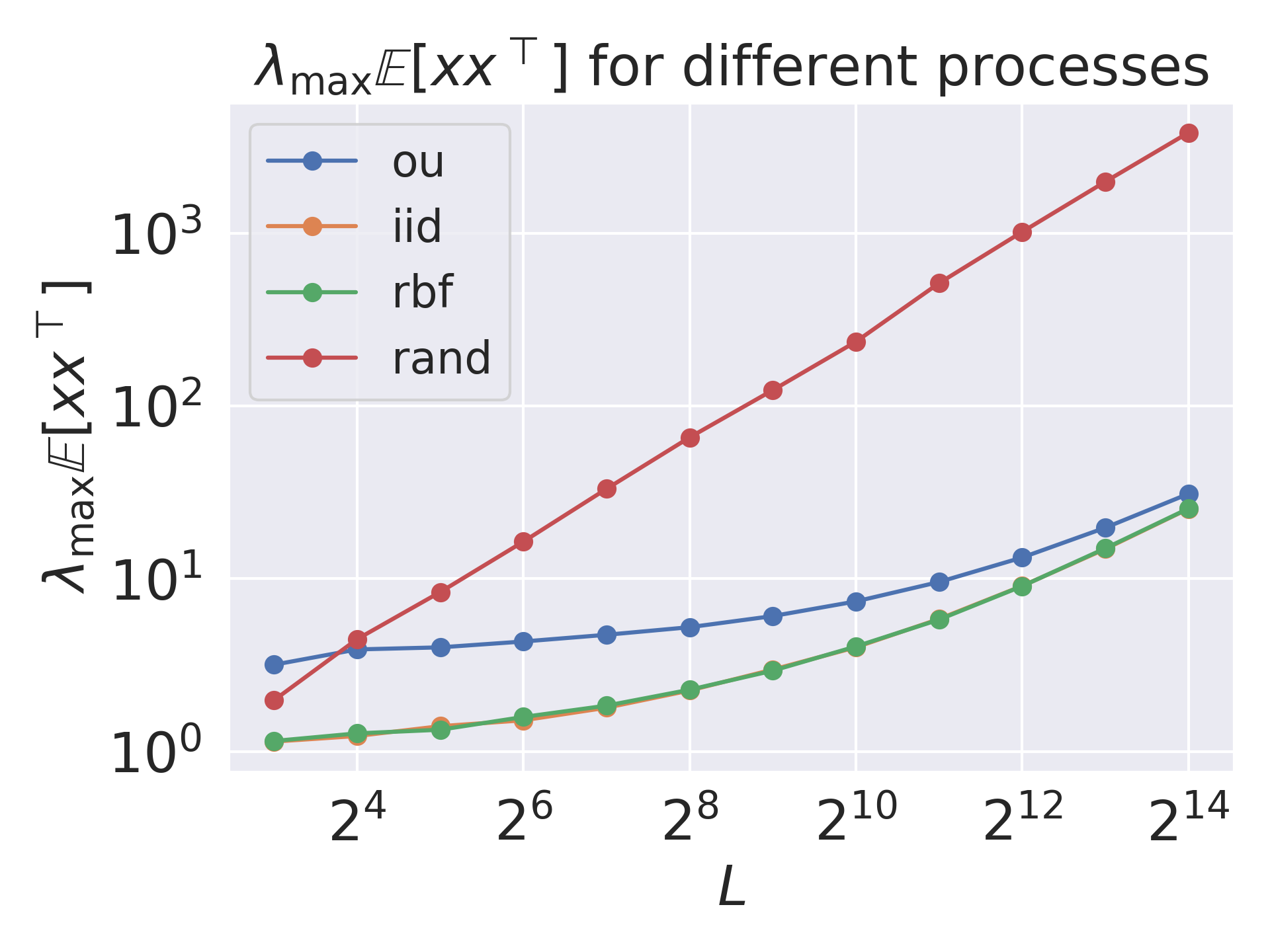}
    \end{minipage}
    \hfill
    \begin{minipage}{0.32\textwidth}
        \centering
        \includegraphics[width=\textwidth]{ 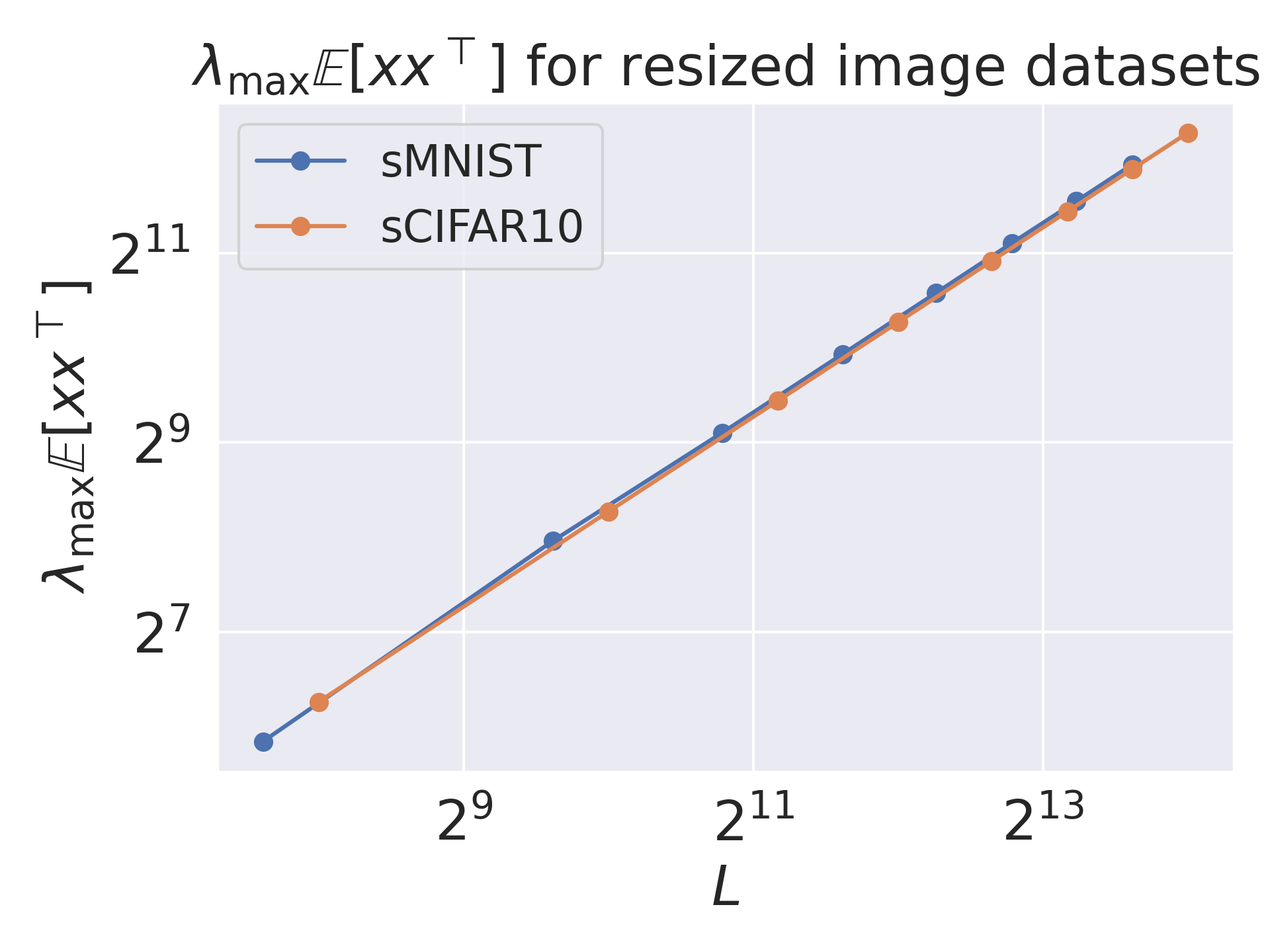}
    \end{minipage}

    \caption{(Left) Training a diagonal SSM (\ref{eq: zoh ssm}) on a copying task using i.i.d. data with a dimension of $128$. We vary the minimal timescale $\Delta_{\min} = 1/L, 1/\sqrt{L}$ and the maximal timescale $\Delta_{\max} = 1/L, 1/\sqrt{L}, 0.1$ w.r.t. sequence length $L$.
    (Middle) The maximal eigenvalue of the autocorrelation matrix $\mathbb{E}[xx^\top]$ on different random processes of $x$.
    (Right) The maximal eigenvalue of $\mathbb{E}[xx^\top]$ on sequential image datasets sMNIST and sCIFAR10 with different resize rates varied from $0.5$ to $4$.
    }
    \label{fig: three plots side by side}
\end{figure}

\begin{figure}
    \centering
    \includegraphics[width=\textwidth]{ 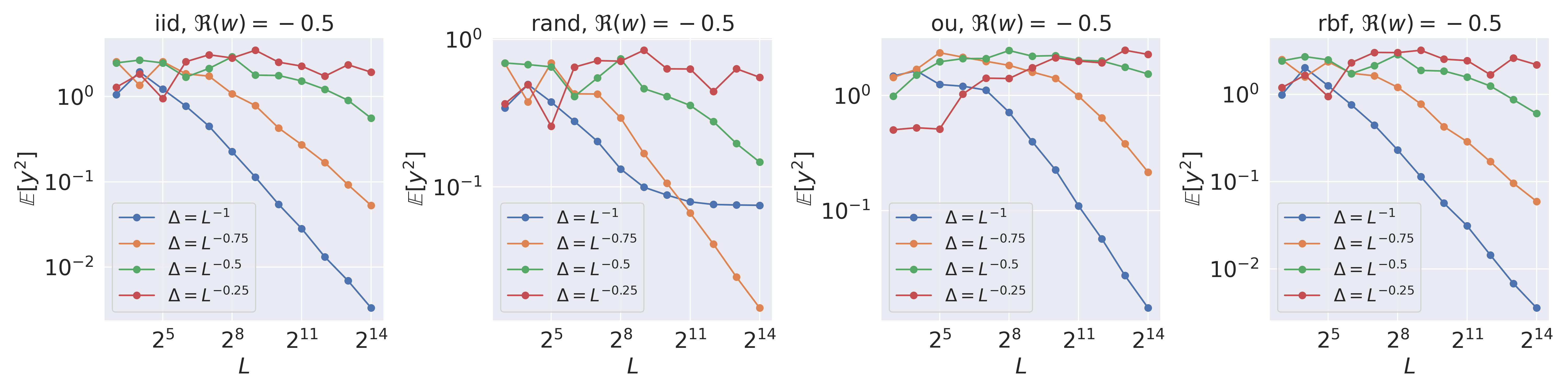}
    \caption{The expected magnitude of the SSM output value on synthetic sequences with different autocorrelation.
    The real part $\Re(w) = -0.5$ follows the common practice and we consider four dependencies between the timescale $\Delta$ and the sequence length $L$.}
    \label{fig: init for s4d-lin-1}
\end{figure}

In this subsection, we derive a stability condition for the ZOH-discretized diagonal SSM (\ref{eq: zoh ssm}) when the state vector's real part $\Re(w)$ is non-positive. From both theoretical and numerical perspectives, we demonstrate that the dependency of the model timescale $\Delta$ on the sequence length $L$ is strongly influenced by the data autocorrelation.
To start with, we prove the following theorem that provides an upper bound on the magnitude of the model output value.
\begin{theorem}\label{thm: mag output value}
    Consider a ZOH discretized SSM (\ref{eq: zoh ssm}) with timescale $\Delta > 0$ and $\Re(w_j) \leq 0$ for $j = 1,\ldots,m$.
    Suppose that the input sequence $(x_0,\ldots,x_{L-1})$ is sampled from a unknown distribution in $\mathbb{R}^L$, and the read-out vector $c$ is from i.i.d. standard normal distribution. 
    Then we have 
    \begin{equation*}
        \mathbb{E}_{c, x}[y_L^2] \leq \Delta^2 m^2 L \cdot \lambda_{\max}(\mathbb{E}[x x^\top]),
    \end{equation*}
    where $\lambda_{\max}(\cdot)$ represents the maximal eigenvalue.
\end{theorem}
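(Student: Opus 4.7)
The plan is to view $y_L$ as a bilinear form in $c$ and $x$, separate the two sources of randomness, and then reduce the problem to bounding the coefficients that multiply each input token.

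First, I would read off equation (\ref{eq: zoh ssm}) to write
\[
y_L \;=\; \sum_{\ell=0}^{L-1} \alpha_\ell\, x_\ell, \qquad
\alpha_\ell \;=\; \Re\!\left(\sum_{j=1}^m \beta_{\ell,j}\, c_j\right), \qquad
\beta_{\ell,j} \;=\; \frac{e^{\Delta w_j}-1}{w_j}\, e^{\Delta w_j (L-1-\ell)},
\]
so that $\alpha \in \mathbb{R}^L$ depends only on $c, w, \Delta$ and is independent of $x$. Because $\alpha$ is deterministic given $c$, I can condition on $c$ and apply the Rayleigh quotient bound to the quadratic form $y_L^2 = \alpha^\top (xx^\top) \alpha$, getting
\[
\mathbb{E}_{c,x}[y_L^2] \;=\; \mathbb{E}_c\!\left[\alpha^\top \mathbb{E}[xx^\top]\, \alpha\right]
\;\leq\; \lambda_{\max}\!\left(\mathbb{E}[xx^\top]\right)\, \mathbb{E}_c\!\left[\|\alpha\|_2^2\right].
\]

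Next I would control $\mathbb{E}_c[\|\alpha\|_2^2] = \sum_{\ell=0}^{L-1}\mathbb{E}_c[\alpha_\ell^2]$ by bounding each $|\beta_{\ell,j}|$. The key observation is the integral identity $\frac{e^{\Delta w_j}-1}{w_j} = \int_0^\Delta e^{w_j s}\,ds$, which together with $\Re(w_j)\leq 0$ yields $\bigl|\tfrac{e^{\Delta w_j}-1}{w_j}\bigr| \leq \int_0^\Delta e^{\Re(w_j)s}\,ds \leq \Delta$. Since $|e^{\Delta w_j(L-1-\ell)}| = e^{\Delta \Re(w_j)(L-1-\ell)} \leq 1$ for the same reason, we obtain $|\beta_{\ell,j}|\leq \Delta$ uniformly in $\ell, j$. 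Using $|\alpha_\ell| \leq |K_\ell|$ with $K_\ell = \sum_j \beta_{\ell,j} c_j$, then Cauchy–Schwarz,
\[
\alpha_\ell^2 \;\leq\; \Bigl(\sum_{j=1}^m |\beta_{\ell,j}|\,|c_j|\Bigr)^{\!2}
\;\leq\; \Bigl(\sum_{j=1}^m |\beta_{\ell,j}|^2\Bigr)\Bigl(\sum_{j=1}^m |c_j|^2\Bigr)
\;\leq\; m\Delta^2 \sum_{j=1}^m |c_j|^2.
\]
Taking $\mathbb{E}_c$ and using $\mathbb{E}[|c_j|^2]=1$ gives $\mathbb{E}_c[\alpha_\ell^2] \leq m^2\Delta^2$, and summing over the $L$ indices $\ell$ gives $\mathbb{E}_c[\|\alpha\|_2^2] \leq L\, m^2\, \Delta^2$. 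Combining with the Rayleigh bound above yields the claimed inequality.

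The only delicate step is the uniform bound $|\beta_{\ell,j}|\leq \Delta$: a naive power-series bound would introduce $e^{|\Delta w_j|}$ factors that blow up and would not match the clean $\Delta^2 m^2 L$ scaling the theorem advertises. Rewriting $(e^{\Delta w_j}-1)/w_j$ as an integral and exploiting $\Re(w_j)\leq 0$ in both the prefactor and the propagator $e^{\Delta w_j(L-1-\ell)}$ is the single place where the non-positivity hypothesis is used, and it is what keeps the bound free of any dependence on $\Im(w)$ or on the magnitudes $|w_j|$. Everything else is bookkeeping: the independence of $\alpha$ from $x$, the Rayleigh quotient, and a Cauchy–Schwarz that intentionally decouples the deterministic coefficients from the Gaussian magnitudes in order to produce the factor $m^2$ rather than $m$.
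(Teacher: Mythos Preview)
Your proof is correct and follows the same conceptual skeleton as the paper's: write $y_L$ as a bilinear form in $c$ and $x$, take the expectation over $x$ first to pull out $\lambda_{\max}(\mathbb{E}[xx^\top])$ via the Rayleigh quotient, and then control the remaining kernel coefficients using the single estimate $\bigl|(e^{\Delta w_j}-1)/w_j\bigr|\leq \Delta$, which both of you obtain from the integral representation and the hypothesis $\Re(w_j)\leq 0$.

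The execution differs. The paper first packages everything into a matrix identity $y_L=\Delta\, c^\top V J x$ with $c\in\mathbb{R}^{2m}$, then factorizes $V=\tfrac{1}{2}\Phi^H D V_L$ into a scaled unitary $\Phi$, a diagonal $D$ (whose norm is bounded by $1$ via Lemma~\ref{lemma: estiamte diagonal ez}), and a complex Vandermonde block $V_L$; the $m^2L$ factor then comes from $\lambda_{\max}(V_L V_L^H)\leq \Tr(V_L V_L^H)$. You instead bound each scalar coefficient $|\beta_{\ell,j}|\leq\Delta$ directly and finish with a one-line Cauchy--Schwarz. Your route is shorter and avoids the auxiliary matrices $\Phi, D, V_L$ entirely; the paper's factorization, on the other hand, makes the Vandermonde structure explicit, which is what motivates the later remark that the argument does not transfer verbatim to other discretizations (e.g.\ bilinear) whose kernel matrices lack this clean splitting. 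Both approaches spend $\Re(w_j)\leq 0$ in the same two places---the prefactor and the propagator---and both land on exactly $\Delta^2 m^2 L\,\lambda_{\max}(\mathbb{E}[xx^\top])$.
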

The proof is provided in Section \ref{sec: proof of thm mat output value}. 
In practice, the hidden state size $m$ is often much smaller than the sequence length $L$ \citep{gu2023how}.
Given this, we focus on fixing the hidden size $m$ and investigating the relationship between the model timescale $\Delta$ and the sequence length $L$.
We see that Theorem \ref{thm: mag output value} connects the model timescale $\Delta$ with the sequence length $L$ in terms of the data autocorrelation matrix $\mathbb{E}[xx^\top]$.
If we have normalized the sequences such that $\mathbb{E}[\|x\|^2] = 1$, then a simple observation is that $1 \leq \lambda_{\max}(\mathbb{E}[x x^\top]) 
\leq L$ because $\Tr(\mathbb{E}[x x^\top]) = L$. 
This indicates that the maximal eigenvalue of the autocorrelation matrix can have different dependencies on $L$ based on the temporal dependencies.
For example, when the elements in the sequence are uncorrelated with each other, $x$ exhibits \textit{zero} temporal dependencies, and the autocorrelation matrix is an identity matrix with $\lambda_{\max} (\mathbb{E}[xx^\top]) = 1$. In this case, $\Delta$ should scale as $1/\sqrt{L}$ to ensure training stability.
On the other hand, when $x$ is a constant sequence $(1,1,\ldots,1)$, then $x$ exhibits \textit{full} temporal dependencies.
The autocorrelation matrix then becomes a rank-$1$ matrix with $\lambda_{\max} (\mathbb{E}[xx^\top]) = L$, implying that $\Delta$ should scale as $1/L$.
Additionally, this upper bound is applicable for all cases where $\Re(w) \leq 0$. 
As the real part $\Re(w)$ approaches zero, the exponential decay rate of the SSM kernel slows, resulting in a tighter bound. Specifically, when the real part is zero and the input data lacks temporal dependency, the bound becomes tight up to a constant factor. This occurs because the data autocorrelation matrix $\mathbb{E}[x x^\top]$ simplifies to a diagonal matrix under these conditions.
Given a specific task with an input sequential dataset, we can then initialize the model timescale $\Delta$ as $\mathcal{O}(1/\sqrt{L \lambda_{\max}(\mathbb{E}[x x^\top])})$.

\begin{remark}
    Theorem \ref{thm: mag output value} applies for the final output mode, while in practice, there are also some other output modes and the analysis on the stability condition is case by case.
    For example, for the pooling mode $y = \frac{1}{L}\sum_{\ell=1}^L y_L^2$, we can use Theorem \ref{thm: mag output value} to get a same upper bound for $\mathbb{E}_{c, x}[y^2]$.
    Also, in this paper we consider fixed-length tasks, i.e., all the sequences are with the same length.
    For the varied-length case, we may first cluster the sequences into several groups, and then increase the model feature dimension to manage varied sequence lengths separately.
    For example, if the sequence length alternates between $L_1$ and $L_2$, then
    we can double the feature dimension and initialize the model separately for the first and second halves, allowing the model to accommodate two fixed-length datasets simultaneously. 
\end{remark}

\begin{figure}
    \centering
    \includegraphics[width=\textwidth]{ 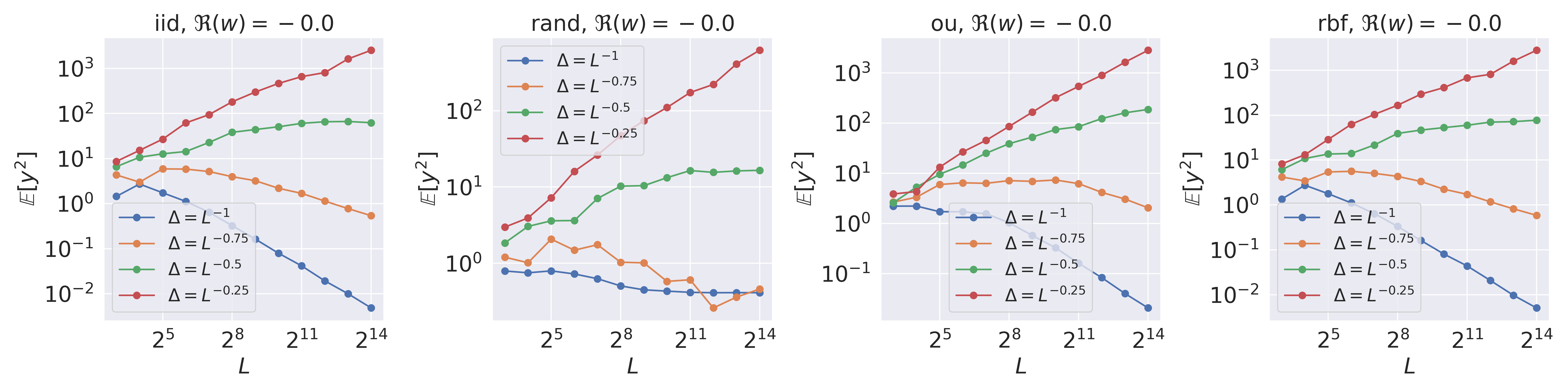}
    \caption{The expected magnitude of the SSM output value on synthetic sequences with different autocorrelation and different dependencies between $\Delta$ and $L$.
    The real part $\Re(w)$ is set to be zero.}
    \label{fig: init for s4d-lin-0}
\end{figure}

\begin{figure}
    \centering
    \includegraphics[width=\textwidth]{ 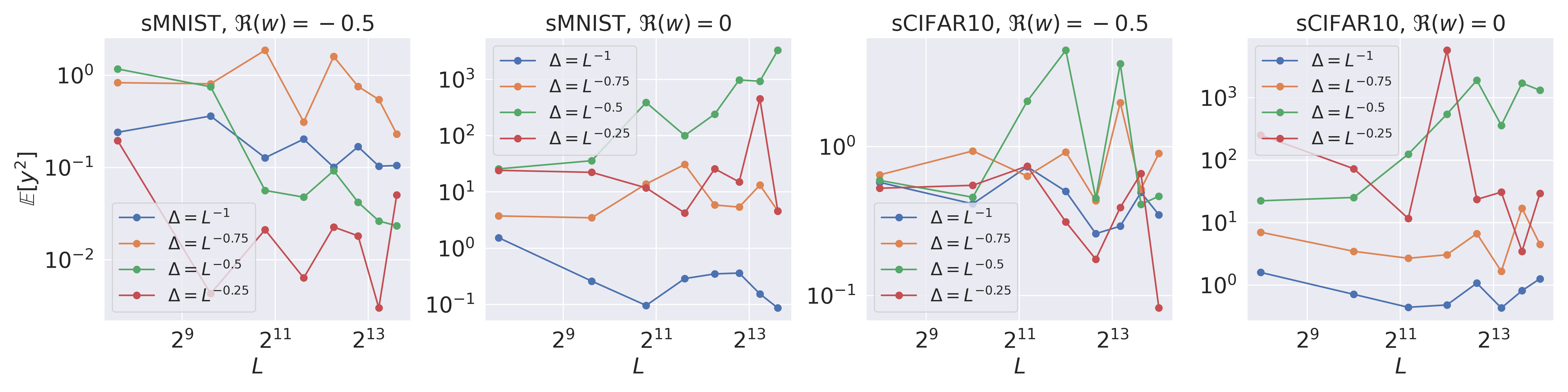}
    \caption{The expected magnitude of the SSM output value on sequential image datasets with different resize rates (ranging from $0.5$ to $4$) and different dependencies between $\Delta$ and $L$.}
    \label{fig: imag init for s4d-lin}
\end{figure}

\textbf{Numerical experiments on $\lambda_{\max}(\mathbb{E}[xx^\top])$ and $\mathbb{E}[y_L^2]$.} 
To validate our theory, we conduct experiments on the exact values of the magnitude of the model output and $\mathbb{E}[xx^\top]$.
Specifically, we consider both synthetic and real sequential datasets in both negative and zero real part cases.
For synthetic datasets, we consider Gaussian process with mean $0$ and autocovariance function $\mathbb{E}[x_i x_j] = K(i, j)$.
By restricting $K(i,i) = 1$ then the autocovariance matrix is exactly the same as the autocorrelation matrix.
In this paper, we choose $4$ Gaussian processes with different autocovariance functions and plot their maximal eigenvalues.
The autocovariance functions for ``ou, iid, rbf'' are $K(i, j) = \exp(-|i-j|/\ell), \delta_{i-j}, \exp(-|i-j|^2/\ell)$ respectively.
The autocovariance matrix for ``rand'' is given by $\Sigma \Sigma^\top$ where $\Sigma$ is a random matrix with i.i.d. uniform distributed entries in $[0, 1]$.
As Figure \ref{fig: three plots side by side} (Middle) shows, different processes have varying dependencies of $\lambda_{\max}(\mathbb{E}[xx^\top])$ on $L$ ranging from $\mathcal{O}(1)$ to $\mathcal{O}(L)$.
For the i.i.d. case, $\lambda_{\max}(\mathbb{E}[xx^\top])$ is not always $1$ in Figure \ref{fig: three plots side by side} (Middle), which is because we use the sample autocorrelation matrix to replace the expected autocorrelation matrix.
For real sequential datasets, we choose to resize the MNIST dataset \citep{lecun2010mnist} and the gray CIFAR10 dataset \citep{tay2021long} with resize rates $[0.5, 1, 1.5, 2, 2.5, 3, 3.5, 4]$ and the flatten the images to sequences.
More experiment details are provided in Appendix \ref{appendix: experiment details}.
We record $\lambda_{\max}(\mathbb{E}[xx^\top]$ based on the entire training dataset. As shown in Figure \ref{fig: three plots side by side} (Left), the maximal eigenvalue scales (almost) linearly with sequence length across the resize rate for both sequential MNIST (sMNIST) and sequential CIFAR10 (sCIFAR10) datsets.
Additionally, we plot the relationship between the magnitude of the model output value and sequence length by varying the timescale $\Delta = [L^{-1}, L^{-0.75}, L^{-0.5}, L^{-0.25}]$. 
In Figures \ref{fig: init for s4d-lin-1} and \ref{fig: imag init for s4d-lin}, when $\Re(w) = -0.5$ (following the setup in \cite{gu2022s4d, gu2023how}), the magnitude $\mathbb{E}[y_L^2]$ remains stable for both synthetic and resized image datasets for all decay rates of $\Delta$.
When $\Re(w) = 0$, Figures \ref{fig: init for s4d-lin-0} and \ref{fig: imag init for s4d-lin} demonstrate that for the `rand' process, $\Delta = L^{-1}$ is stable. 
For the `iid,' `ou,' and `rbf' processes, $\Delta = L^{-0.75}$ is stable. 
This indicates that our bound in Theorem \ref{thm: mag output value} effectively characterizes the relation between $\Delta$ and $L$ for $\Re(w) = 0$.
Moreover, as shown in Figure \ref{fig: imag init for s4d-lin}, for the sequential image datasets, $\Delta$ should scale as $1/L$ to ensure stability when $\Re(w) = 0$; otherwise, the magnitude increases with sequence length. This finding aligns with the empirical results in \citet{gu2022s4d} that $\Delta$ should scale as $1/L$ to effectively capture the range of dependencies for length $L$.
But their theoretical reasons are based on Fourier analysis of continuous-time SSMs and do not explicitly account for the data autocorrelation.

\textbf{Experiments on copying task with different timescales.}
We tested the performance of the diagonal SSM (\ref{eq: zoh ssm}) on a copying task with various dependencies of $\Delta$ on $L$. 
It is worth noting that, as discussed in \citet{jelassi2024repeat}, SSMs struggle with the copying task because the model's state dimension needs to scale linearly with the sequence length to memorize all the input tokens. 
However, the limitation highlighted in \citet{jelassi2024repeat} pertains to the length generalization task—i.e., training an SSM with short sequences and then testing it on longer sequences will fail if the hidden size $m$ does not grow linearly with $L$.
Here, we focus on a fixed-length task, where both training and test sequences have the same length. 
We find that, with an appropriately initialized timescale, SSMs can effectively handle the copying task even with a small state size.
In this paper, we use a diagonal SSM with a fixed state size $m = 32$ to learn a copying task on i.i.d. data with a dimension of 128, and the timescale $\Delta \in \mathbb{R}^{128}$. 
We vary the minimal and maximal timescales $(\Delta_{\min}, \Delta_{\max})$ with different dependencies on $L$.
From Figure \ref{fig: three plots side by side} (Left), we see that the combination $(\Delta_{\min}, \Delta_{\max}) = (1/L, 0.1)$, which is commonly used in practice \citep{gu2022s4d, gu2023how} to train real datasets, consistently performs worse than setting $\Delta_{\min} = 1/\sqrt{L}$. This stable scaling is in line with our theoretical suggestions for i.i.d. data.
Therefore, the data autocorrelation is very crucial for us to get a good initialization scale on the timescale.
More experiment details are provided in Appendix \ref{appendix: experiment details}.

\subsection{Benefits of zero real part}\label{sec: zero real part}

\begin{figure}
    \centering
    \includegraphics[width=\textwidth]{ 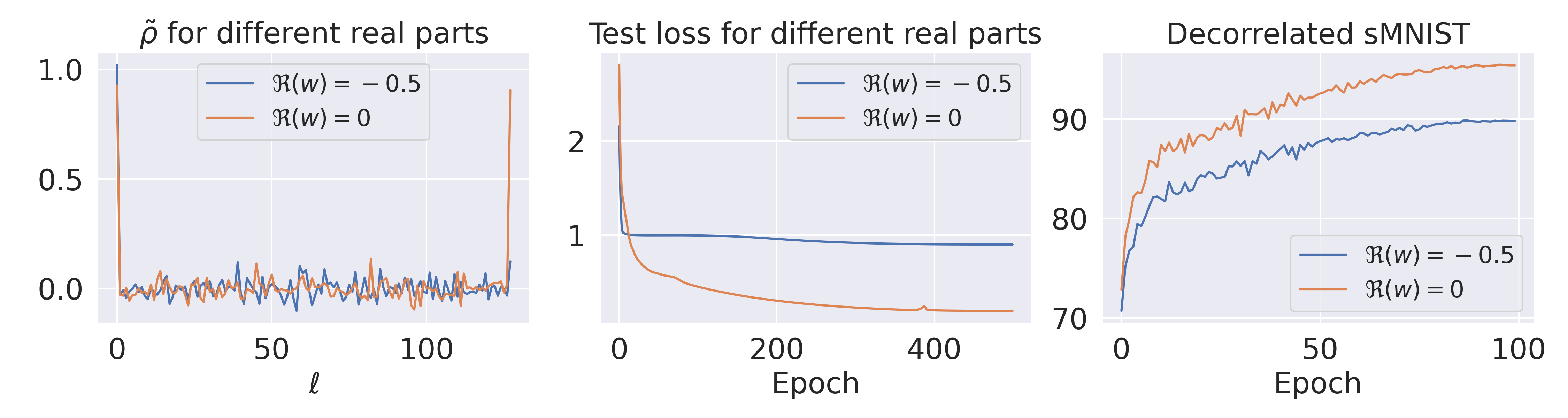}
    \caption{(Left) Training a diagonal SSM (\ref{eq: zoh ssm}) on a task that requires long-term memory.
    The learned memory function $\tilde{\rho}$ effectively captures the spike in long-range dependencies. However, it struggles to do so when the real part is negative.
    (Middle) Test loss on the long-term memory task when initializing $\Re(w) = 0$ and $\Re(w) = -0.5$.
    (Right) Test accuracy for training a diagonal SSM on decorrelated sequential MNIST dataset with different real parts at initialization.}
    \label{fig: zero vs negative}
\end{figure}

In this subsection, we investigate the benefits of initializing $\Re(w) = 0$ for tasks that require long-term memory.
In previous works \citep{li2021on, li2022approximation}, it is shown that recurrent-based models suffer from the curse of memory in both approximation and optimization when there is long-term memory in the target.
For example, we consider using a diagonal SSM (\ref{eq: zoh ssm}) to learn a input-out relationship given by a real-valued target function $\rho^*$ such that 
\begin{equation*}
    y^*_{\ell} = \rho^*_{\ell-1} x_0 + \cdots + \rho^*_{0} x_{\ell-1}, \quad
    \ell = 1,2,\ldots,L.
\end{equation*}
The objective function is given by the squared difference between the model output $y_L$ and the corresponding label $y_L^*$.
Then in a special case when the input sequences have zero temporal dependencies with $\mathbb{E}[xx^\top] = \mathbb{I}_L$, the expected mean squared error is given by
\begin{equation*}
    \mathbb{E}[|y_L - y^*_L|^2] = \left\|\tilde{\rho} - \rho^*\right\|^2,
\end{equation*}
where $\tilde{\rho}$ is a vector $\left(\Re\left(\sum_{j=1}^m \frac{e^{\Delta w_j}-1}{w_j} c_j e^{\Delta w_j 0}\right),\ldots,\Re\left(\sum_{j=1}^m \frac{e^{\Delta w_j}-1}{w_j} c_j e^{\Delta w_j (L-1)}\right)\right)$ that represents the model's memory, and $\rho^* = (\rho_0^*,\ldots,\rho_{L-1}^*)$.
Therefore, a well-trained SSM means that the model memory function matches with the target function, i.e.,
\begin{equation*}
    \Re\left(\sum_{j=1}^m \frac{e^{\Delta w_j}-1}{ w_j} c_j e^{\Delta w_j \ell}\right) = \rho^*_{\ell}, 
    \quad
    \ell = 0,\ldots,L-1.
\end{equation*}
Then we can see that the curse of memory happens when the target function $\rho^*$ has a sudden spike in a very long distance.
For instance, consider a shifting task that requires mapping an input sequence $(x_0,\ldots,x_{L-1})$ to a shifted sequence $(0,\ldots,0, x_{0})$. 
In this task, the target $\rho^*$ is $(0,\ldots,0, 1)$, which is challenging for an exponentially decaying SSM kernel $\tilde{\rho}$ to capture long-term memory when $\Re(w) < 0$.
However, if we allow the real part to be zero at initialization, then $\tilde{\rho}$ does not undergo exponential decay. 
As a result, we can potentially avoid the curse of memory, even for long sequences, in this scenario.
It is worth noting that in this paper, we do not consider a stable parameterization to ensure $\Re(w) \leq 0$ strictly during training.
This approach means $\Re(w)$ is likely to become positive during training.
Our goal is to allow the model to learn directly from the data without introducing new variables, such as reparameterization methods, which could complicate the analysis.
Otherwise, it would be unclear whether the improvements are due to the zero real part or the introduced reparameterization method.
Our experiments demonstrate that initializing with a zero real part still helps enhance training, even without a stable parameterization. 
This suggests that, despite the potential optimization stability challenges during training, a zero real part can be beneficial for training on certain tasks.

\begin{figure}
    \centering
    \includegraphics[width=\textwidth]{ 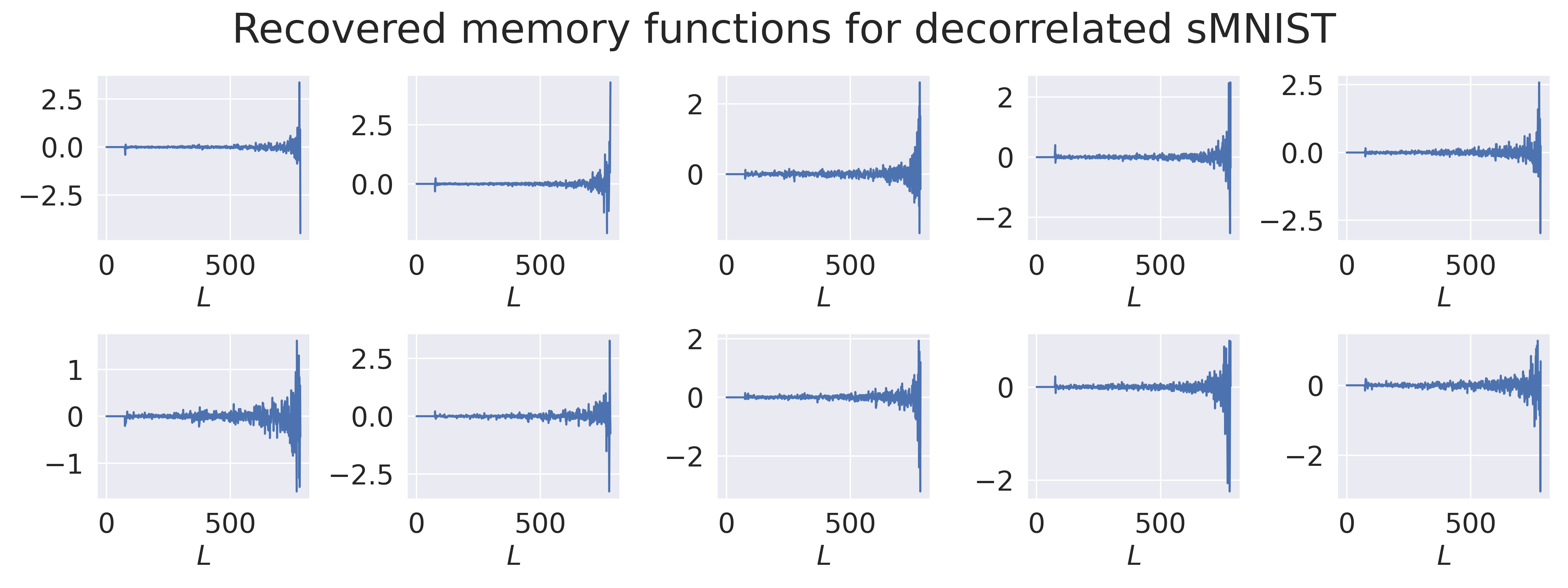}
    \caption{Recovering the memory function ${\rho}$ on the decorrelated sequential MNIST dataset by solving a linear equation $X * {\rho} = Y$, where $X \in \mathbb{R}^{N \times L}$ is the collected sequence matrix, $Y \in \mathbb{R}^{N \times 10}$ is the one-hot label matrix, and $*$ is the convolution operator. Then ${\rho} \in \mathbb{R}^{L \times 10}$ has $10$ channels and we plot the scaled function $\sqrt{L} \rho$ each channel  to show the underlying memory patterns.}
    \label{fig: smnist memory f}
\end{figure}

\textbf{Experiments on the benefits of zero real part.}
To validate the effectiveness of having a zero real part, we conduct experiments on both synthetic and real datasets that require long-term memory.
For the synthetic task, we use i.i.d. sequential data to easily visualize the expected error via the memory function. 
The goal is to learn an input-output mapping from $(x_0,\ldots,x_{L-1})$ to $x_0 + x_{L-1}$, which requires the model to memorize both the first and last token. 
The target memory function $\rho^*$ is $(1, 0,\ldots,0, 1)$. 
In our setting, the sequence length $L$ is 128, and the hidden state size $m$ is 32.
As shown in Figure \ref{fig: zero vs negative} (Left) and (Middle), the SSM with a zero real part outperforms the case with a negative real part. 
It is evident that by initializing $\Re(w) = 0$, the learned memory function is able to capture long range dependencies.
For the real-world task, we utilize the sequential MNIST (sMNIST) dataset. 
Before training, we preprocess the entire dataset with a linear transformation to decorrelate the training sequences, resulting in an autocorrelation matrix that is an identity matrix.
We recover the underlying target memory function by solving a least square problem $\min_{\rho} \|X * \rho - Y\|_F^2$ where $X \in \mathbb{R}^{50000 \times 784}$ is the collected sequence matrix, $Y \in \mathbb{R}^{50000 \times 10}$ is the one-hot label matrix, and $*$ denotes the convolution operator.
The recovered target memory function $\rho \in \mathbb{R}^{784 \times 10}$ has 10 channels. To illustrate the underlying memory patterns, we plot $\sqrt{L} \rho$ for each channel in Figure \ref{fig: smnist memory f}. 
We observe that for the decorrelated sMNIST dataset, the underlying memory function exhibits a sudden spike at a long distance, implying the curse of memory when $\Re(w) < 0$.
This observation is confirmed in Figure \ref{fig: zero vs negative} (Right), which shows that initializing $\Re(w) = 0$ outperforms the case with a negative real part. 
We also apply our methods to the Long Range Arena (LRA) benchmark \citep{tay2021long}, which features six diverse tasks ranging from text to image processing.
Given that we lack precise knowledge of the memory function for each LRA task, we opt to initialize a fraction of the real part as zero and compare this setup to the default S4D model \citep{gu2022s4d}.
In particular, for each single layer of an $L$-layer S4D model, which has a feature dimension of $d$ and a state size of $m$, there are $d$ state vectors $w \in \mathbb{C}^m$. 
At initialization, a fraction $p \in [0, 1]$ of these state vectors is randomly selected to have their real parts set to zero. 
When $p = 0$, the training proceeds following the baseline setup. 
As $p$ increases, the model starts with more zero real parts. 
To ensure credible results, we exclude any reparameterization method on the zero real part, allowing the model to adapt from the data during training. 
This approach isolates the impact of the zero real part on performance without confounding variables introduced by reparameterization. 
All models were trained with a 6-layer architecture, maintaining the original S4D training conditions as specified in the work by \citet{gu2022s4d}.
We present both the test accuracy and the ratio of \textit{non-negative} real part parameters to the total $Ldm$ real part parameters upon completion of training in Table \ref{table: lra result}.
We can see that, initializing an appropriate fraction of state vectors with zero real parts enables the model to outperform the default S4D configuration. 
Importantly, even post-training, some non-negative real parts persist, suggesting that the model retains stability and effectively adapts to the data.
We provide more experimental details in Appendix \ref{appendix: experiment details}.
In Appendix \ref{appendix: ablation studies}, we add ablation studies on the gray-sCIFAR dataset with varied fractions $p$ of zero real part at initialization.

\begin{table}
\begin{center}
\resizebox{\textwidth}{!}{%
\begin{tabular}{|c|c|c|c|c|c|c|c|}
\hline 
&   ListOps   & Text    & Retrieval        & Image     & Pathfinder  & PathX        & Avg \\
\hline 
 Baseline & 60.47 & 86.18  & 89.46 & 88.19 & 93.06 & 91.95 & 84.89  \\
 \hline
 Initialize 10\% of $\Re(w)$ to be 0 & \textbf{61.44} & \textbf{88.05} & \textbf{90.73} & \textbf{89.11} & \textbf{95.58} & \textbf{97.55} & \textbf{87.08}  \\
 \hline
 Ratio for $\Re(w) \geq 0$ after training & 1.29\% & 1.99\% & 2.58\% & 4.31\% & 4.31\% & 4.04\% & 3.09\%  \\
 \hline
\end{tabular}
}
\end{center}
\caption{
Test accuracy for training S4D on the LRA benchmark with different fractions of zero real part at initialization.
}
\label{table: lra result}
\end{table}

\subsection{Imaginary part induces an approximation-estimation tradeoff}\label{sec: imag part}

In the previous subsection,
we show that the real part $\Re(w)$ is related with the long-term memory when training SSMs.
In this subsection, we focus on the imaginary part $\Im(w)$. 
We will demonstrate how $\Im(w)$ influences the conditioning of the SSM optimization problem within a convex framework. Additionally, from an approximation standpoint, we reveal an approximation-estimation tradeoff that arises when training SSMs with a particular class of target functions.

\textbf{Convergence analysis.}
Here we consider the continuous-time SSM (\ref{eq: continuous ssm}) and assume that the read-out vector $c$ is in $\mathbb{R}^m$.
This real-value assumption is necessary in the current version to get an theoretical estimate for the spectrum of the induced Gram matrix because we use the Gershgorin circle theorem to prove Theorem \ref{thm: spectrum of G}.
This theorem is applicable only when the matrix has dominant diagonal entries. 
If the vector $c$ is complex-valued, the resulting Gram matrix would not be diagonal-dominant, rendering the Gershgorin circle theorem ineffective. 
Suppose the true input-output relation is given by some real-valued target function ${\rho}^*(s) \in L^1[0, \infty) \wedge L^2[0, \infty)$ with $y^*(t) = \int_0^t {\rho}^*(s) x(t-s)ds$.
We use the squared difference between the SSM output $y(t)$ and the target output ${y}^*(t)$ at some terminal time $T > 0$ averaged over input distributions, which can be written as
\begin{equation}\label{eq: population loss}
    \mathcal{L}(c, a) := \mathbb{E}_x \left(y(T) - {y}^*(T)\right)^2.
\end{equation}
To make the theoretical analysis amenable,
we assume that $x(t)$ is sampled from white noise, i.e., $x(T-s)ds = d W_s$ where $W_s$ is the canonical real-valued Wiener process.
Then by It\^{o}’s isometry (Proposition \ref{prop: Ito isometry}), the expected risk (\ref{eq: population loss}) can be rewritten as 
\begin{equation*}
    \mathcal{L}(c, w) = \int_0^T \left(c^\top \Re \left(e^{ws}\right) - {\rho}^*(s)\right)^2 ds.
\end{equation*} 
In the practical training, the sequence length is very long and thus we take $T \xrightarrow{} \infty$ to investigate the effect of long-term memory.
To study the effects of the state vector initialization, we consider the following convex optimization problem where $w$ is fixed.
\begin{equation}\label{problem: least square}
    \argmin_{c \in \mathbb{R}^m} \mathcal{L}_c := \int_0^\infty \left(
    \sum_{j=1}^m c_j \Re(e^{w_js}) - {\rho^*}(s)\right)^2 ds.
\end{equation}

From the perspective of function approximation, the HiPPO framework \citep{gu2020hippo} initializes $w$ such that the SSM basis kernel functions
$\{\Re(e^{w_js})\}_{j=1}^\infty$ are orthogonal in $L^2[0, \infty)$ w.r.t. some measure $\omega(s)$.
In this paper, we discover the effects of the state initialization on the optimization problem (\ref{problem: least square}).
Let $c^*$ be one of the solution of the convex problem (\ref{problem: least square}), then $c^*$ is a stationary point that satisfies 
$G c^* = \int_0^\infty \Re(e^{ws}) {\rho}^*(s) ds$,
where $G \in \mathbb{R}^{m \times m}$ is a Gram matrix with 
\begin{equation}\label{eq: def gram g}
    [G]_{j, k} = \int_0^\infty \Re(e^{w_js}) \Re(e^{w_ks})ds.
\end{equation}
Therefore, the spectrum of the Gram matrix $G$ determines the numerical stability and convergence rate of optimization algorithms for solving the convex problem (\ref{problem: least square}).
We show in the following proposition 
that when \( w \in \mathbb{R}^m \) and all \( w_j \) are distinct, or when \( w \in \mathbb{C}^m \) and all the imaginary parts $\Im(w)$ are non-zero and distinct, then \( G \) is positive definite.
\begin{proposition}\label{prop: pd of G}
    Let $w_j = a_j + i \cdot v_j$ with $a_j, v_j \in \mathbb{R}$ for $j=1,\ldots,m$.
    If all $v_j = 0$, i.e., $w \in \mathbb{R}^m$, then $G$ is positive definite given that all $a_j$ are distinct.
    If $v_j$ are all non-zero, i.e., $w \in \mathbb{C}^m$, then $G$ is positive definite given that all $v_j$ are distinct.
\end{proposition}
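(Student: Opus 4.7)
The plan is to reduce positive definiteness of $G$ to the linear independence of the basis kernels $\phi_j(s) := \Re(e^{w_j s})$ in $L^2[0,\infty)$. Collecting them into the vector-valued function $\phi(s) = (\phi_1(s),\ldots,\phi_m(s))^\top$, the defining formula (\ref{eq: def gram g}) rewrites as $G = \int_0^\infty \phi(s)\phi(s)^\top\, ds$, so that for every $\xi\in\mathbb{R}^m$,
$$\xi^\top G\,\xi \;=\; \int_0^\infty \bigl(\xi^\top \phi(s)\bigr)^2 ds \;\ge\; 0.$$
Thus $G$ is automatically positive semidefinite, and positive definiteness reduces to showing that $\xi^\top \phi(s) = 0$ for a.e.\ $s\ge 0$ forces $\xi = 0$. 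Since $s\mapsto \xi^\top \phi(s)$ is real analytic, ``almost everywhere'' upgrades to ``everywhere,'' and the question becomes linear independence of $\phi_1,\ldots,\phi_m$ as functions on $[0,\infty)$.

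For the real case ($v_j = 0$, distinct $a_j$) I would invoke the classical fact that $\{e^{a_j s}\}_{j=1}^m$ are linearly independent when the exponents are distinct: evaluating $\sum_j \xi_j e^{a_j s}$ together with its first $m-1$ derivatives at $s = 0$ yields a Vandermonde system in $(a_1,\ldots,a_m)$, whose determinant $\prod_{j<k}(a_k - a_j)$ is nonzero, forcing $\xi = 0$.

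For the complex case ($v_j \ne 0$, distinct) I would use Euler's identity to decompose $\phi_j(s) = \tfrac{1}{2}\bigl(e^{w_j s} + e^{\bar w_j s}\bigr)$, so that a vanishing combination $\sum_{j} \xi_j \phi_j \equiv 0$ becomes a null linear combination of the $2m$ complex exponentials $\{e^{w_j s}, e^{\bar w_j s}\}_{j=1}^m$. Assuming these $2m$ exponents are pairwise distinct, linear independence of distinct complex exponentials (again proved by the derivative/Vandermonde trick, or equivalently via the injectivity of the Laplace transform) forces every coefficient, and hence every $\xi_j$, to vanish.

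The main obstacle is bookkeeping in the complex case: distinctness of the $v_j$ alone does not a priori imply distinctness of the $2m$ exponents $\{w_j, \bar w_j\}$, since in principle $w_j = \bar w_k$ could occur when $a_j = a_k$ and $v_j = -v_k$. This degenerate configuration is ruled out by the standard SSM convention of parameterizing only one representative from each complex-conjugate pair (equivalently, taking $v_j > 0$); indeed, since $\Re(e^{(a+iv)s}) = \Re(e^{(a-iv)s})$, otherwise $\phi_j$ and $\phi_k$ would coincide and $G$ would be singular on the nose. Once this convention is in force, the Vandermonde step goes through cleanly in both cases and the proposition follows.
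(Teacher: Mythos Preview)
Your strategy is the same as the paper's: write $\xi^\top G\,\xi=\int_0^\infty(\sum_j\xi_j\Re(e^{w_js}))^2ds$, reduce positive definiteness to linear independence of the kernels, split $\Re(e^{w_js})=\tfrac12(e^{w_js}+e^{\bar w_js})$ in the complex case, and finish with a Vandermonde argument. The only technical difference is how the Vandermonde system is extracted. You differentiate at $s=0$, obtaining a Vandermonde matrix directly in the exponents $w_1,\ldots,w_m$ (resp.\ $w_1,\bar w_1,\ldots,w_m,\bar w_m$). The paper instead samples at the arithmetic progression $s=0,\gamma,2\gamma,\ldots$, which produces a Vandermonde matrix in the nodes $e^{\gamma w_j}$ (resp.\ $e^{\gamma w_j},e^{\gamma\bar w_j}$); because $\exp$ is not injective on $\mathbb{C}$, the paper must then argue that a scaling $\gamma>0$ can be chosen to make all $2m$ nodes distinct. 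Your derivative route avoids this extra step entirely. You also flag explicitly the degeneracy $w_j=\bar w_k$ (i.e.\ $a_j=a_k$, $v_j=-v_k$), which makes $\phi_j\equiv\phi_k$ and renders $G$ singular; the paper's scaling claim silently fails in exactly this case as well, so your convention $v_j>0$ (or any rule excluding conjugate pairs) is the honest fix for both proofs.
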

The proof is based on the argument of Vandermonde matrix, and we provide details in Appendix \ref{sec: proof prop pd of G}.
Given that the gram matrix $G$ is positive-definite, we are ready to study its spectrum.
In the following theorem, we show that for complex-valued SSMs, the gram matrix $G$ can be well-conditioned provided that the imaginary parts $\Im(w)$ are well separated.
\begin{theorem}\label{thm: spectrum of G}
    Let $\lambda_{\min}(G), \lambda_{\max}(G)$ be the extreme eigenvalues of $G$ defined in (\ref{eq: def gram g}), and let $\coth(x) = \frac{e^{2x}+1}{e^{2x}-1}$.
    Suppose that $w_j = -0.5 + i \cdot v_j$ for $v_j \in \mathbb{R}$,  
    and we define the separation distance $\delta := \min_{j \neq k} |v_j - v_k|$.
    Then if $\delta > 0$, we have 
    \begin{equation*}
        1.19 - \frac{3\pi}{4\delta} \coth \left(\frac{\pi}{\delta}\right) < \lambda_{\min}(G) \leq \lambda_{\max}(G) < \frac{5}{12} + \frac{3\pi}{4\delta} \coth \left(\frac{\pi}{\delta}\right).
    \end{equation*}
\end{theorem}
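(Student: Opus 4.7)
The plan is to split the analysis of $G$ into two steps: an explicit evaluation of the entries, followed by a Gershgorin-type eigenvalue bound that is quantified via a classical cotangent series identity.

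First I would evaluate each entry. Writing $\Re(e^{w_j s}) = e^{-s/2}\cos(v_j s)$, the product-to-sum identity together with the elementary integral $\int_0^\infty e^{-s}\cos(\alpha s)\,ds = 1/(1+\alpha^2)$ yields
\begin{equation*}
[G]_{j,k} = \int_0^\infty e^{-s}\cos(v_j s)\cos(v_k s)\,ds = \frac{1}{2}\left[\frac{1}{1+(v_j-v_k)^2} + \frac{1}{1+(v_j+v_k)^2}\right],
\end{equation*}
so in particular $[G]_{j,j} = \tfrac{1}{2} + \tfrac{1}{2(1+4v_j^2)} \in [\tfrac{1}{2},1]$. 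This isolates the row-center information needed for Gershgorin and exhibits two structurally different sums controlling the off-diagonal mass.

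Second, I would apply Gershgorin's circle theorem: every eigenvalue of $G$ lies in some disk $[\,[G]_{j,j}-R_j,\ [G]_{j,j}+R_j\,]$, where $R_j = \sum_{k \neq j}[G]_{j,k}$. The key quantitative ingredient is the Mittag-Leffler partial-fraction identity
\begin{equation*}
\sum_{n=-\infty}^{\infty} \frac{1}{1+(n\delta)^2} = \frac{\pi}{\delta}\coth\!\left(\frac{\pi}{\delta}\right).
\end{equation*}
Since $\{v_j-v_k\}_k$ and $\{v_j+v_k\}_k$ are both $\delta$-separated (they are translates of $\{v_k\}$), the sum of $1/(1+x^2)$ over either set is majorized by the densest $\delta$-lattice packing, and thus by $\tfrac{\pi}{\delta}\coth(\pi/\delta)$. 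Subtracting the excluded $k=j$ contributions, which equal $1$ for the difference sum and $1/(1+4v_j^2)$ for the sum sum, gives an explicit upper bound on $R_j$. Combining with the explicit diagonal range and then taking the worst row $j$ produces the two-sided eigenvalue estimate. Positive-definiteness from Proposition \ref{prop: pd of G} also guarantees that the Gershgorin lower bound cannot be improved beyond zero by sign considerations.

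The main obstacle is pinning down the precise constants $1.19$, $\tfrac{5}{12}$, and the factor $\tfrac{3}{4}$ multiplying $\tfrac{\pi}{\delta}\coth(\pi/\delta)$. A direct Gershgorin estimate, applying the coth identity to both sums, only yields the crude bound $R_j + [G]_{j,j} \leq \tfrac{\pi}{\delta}\coth(\pi/\delta)$, i.e.\ a factor $1$ instead of $\tfrac{3}{4}$. Sharpening to $\tfrac{3}{4}$ likely exploits the fact that the two shifted sets $\{v_j-v_k\}$ and $\{v_j+v_k\}$ cannot simultaneously attain the worst-case densely packed configuration centered at zero: the first set is forced to contain $0$ (at $k=j$), while the second is forced to contain $2v_j$, so any $v_j$ pushes one of the sums off its maximum. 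A careful bookkeeping of this trade-off, paired with the diagonal range $[G]_{j,j} \in [\tfrac{1}{2},1]$, is what I would expect to convert into the stated constants $\tfrac{5}{12}$ for the upper bound and $1.19$ for the lower.
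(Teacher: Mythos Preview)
Your plan matches the paper's: compute $[G]_{j,k}$ explicitly, apply Gershgorin, and control the row sums via the identity $\sum_{n\ge 1} 1/(1+n^2\delta^2) = -\tfrac12 + \tfrac{\pi}{2\delta}\coth(\pi/\delta)$. The gap you flag---how to extract the factor $\tfrac34$ rather than $1$---is real, but your speculated mutual-exclusion trade-off between the two shifted sets is not the mechanism the paper uses.

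The paper treats the two sums \emph{asymmetrically}. The difference sum $\sum_{k\neq j} 1/(1+(v_j-v_k)^2)$ is bounded by the two-sided lattice $2\sum_{n\ge 1} 1/(1+\delta^2 n^2)$, as you describe. For the plus sum, the paper instead invokes $(v_j+v_k)^2 \ge v_j^2+v_k^2$ (tacitly assuming all $v_k>0$) and then a one-sided packing bound
\[
\sum_{k} \frac{1}{1+(v_j+v_k)^2}\;\le\;\sum_{k} \frac{1}{1+v_j^2+v_k^2}\;\le\;\sum_{n\ge 0} \frac{1}{1+v_j^2+\delta^2 n^2}\;\le\;\frac{1}{1+v_j^2}+\sum_{n\ge 1}\frac{1}{1+\delta^2 n^2},
\]
which contributes only a \emph{single} copy of the lattice series. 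The resulting $2{+}1$ in place of $2{+}2$ is precisely the source of $\tfrac34$. The constants $1.19$ and $\tfrac{5}{12}$ then drop out of optimizing the leftover $v_j$-dependent terms: $\max_x\bigl(\tfrac{1}{1+x^2}-\tfrac{1}{1+4x^2}\bigr)=\tfrac13$ (attained at $x^2=\tfrac12$), giving $\tfrac14+\tfrac16=\tfrac{5}{12}$ for the upper bound, and $\max_x\bigl(\tfrac{1}{1+x^2}-\tfrac{2}{1+4x^2}\bigr)\approx 0.115$, giving $\tfrac54-\tfrac12(0.115)\approx 1.19$ for the lower. Note that the key inequality $(v_j+v_k)^2\ge v_j^2+v_k^2$ requires $v_jv_k\ge 0$, i.e.\ all $v_k$ of one sign---an assumption the theorem statement does not make explicit but the paper's proof relies on.
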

The proof is based on the Gershgorin circle theorem, with details provided in Appendix \ref{proof: thm spectrum of G}. The setup $w_j = -0.5 + i \cdot v_j$ follows the configurations in \cite{gu2022parameterization, gu2022s4d, gu2023how}. 
This theorem shows that the Gram matrix $G$ can be well-conditioned when the separation distance $\delta$ is large.
One example is that for the commonly used S4D-Lin initialization \citep{gu2022s4d}, $v_j = \pi \cdot j$.
Then the separation distance $\delta = \pi$.
Numerical calculations show that $0.2 < \lambda_{\min}(G) \leq \lambda_{\max}(G) < \sqrt{2}$, meaning that $G$ is well-conditioned for any hidden size $m$, and its condition number has a uniform upper bound w.r.t. $m$.
Note that $x \coth(x) \geq 1$ and is increasing on $[0, \infty)$, which implies that the bound for $\lambda_{\min}(G)$ is non-trivial when $\frac{3\pi}{4\delta} \coth \left(\frac{\pi}{\delta}\right) < 1.19$. 
By numerically solving this inequality, it is sufficient to have $\delta > 2.3$.
However, Proposition \ref{prop: pd of G} suggests that as long as $\delta > 0$, the positive-definiteness of $G$ is guaranteed.
This indicates a gap between the lower bound and the actual minimal eigenvalue, which we leave for future research.
\begin{figure}
    \centering
    \includegraphics[width=\textwidth]{ 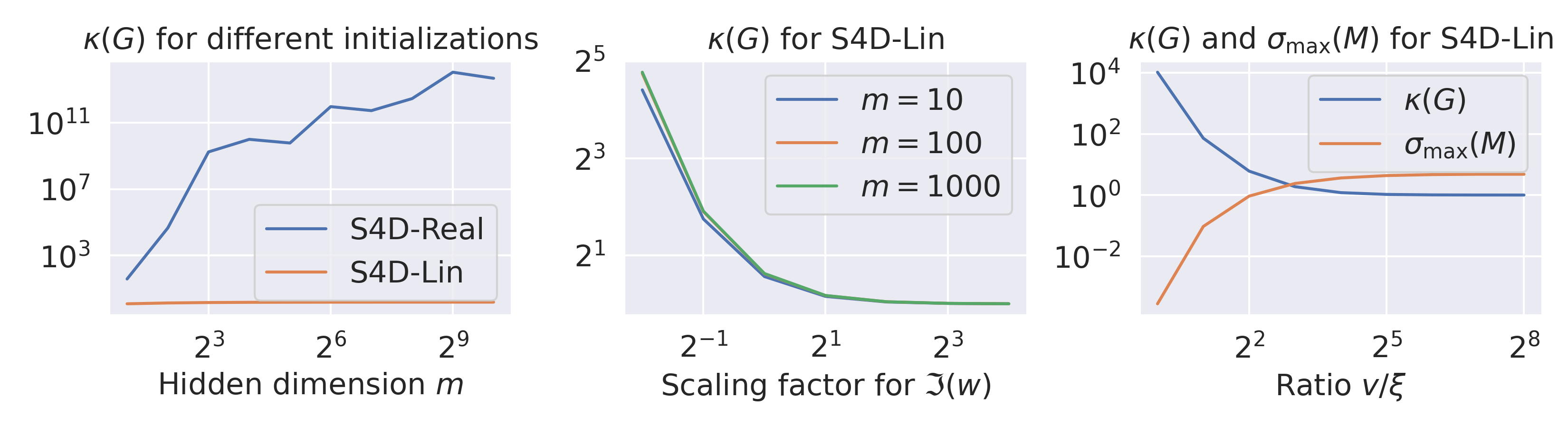}
    \caption{(Left) Condition number $\kappa(G) := \frac{\lambda_{\max}(G)}{\lambda_{\min}(G)}$ for S4D-Real and S4D-Lin with different hidden size $m$. (Middle) $\kappa(G)$ for S4D-Lin with different $m$ by varying scaling factors of the imaginary part $\Im(a)$. (Right) $\kappa(G)$ and approximation measure $\sigma_{\max}(M)$ (in the approximation-estimation tradeoff part) for S4D-Lin by different ratios of model frequencies $v$ and target frequencies $\xi$.}
    \label{fig: real vs complex}
\end{figure}

\textbf{Real vs complex.}
We can now compare real-valued SSMs and complex-valued SSMs in terms of the conditioning of the convex optimization problem (\ref{problem: least square}), which is determined by the condition number of $G$. 
For real-valued SSMs with the S4D-Real initialization \citep{gu2022s4d}, where $w_j = -j$, we have $G_{j,k} = \frac{1}{j+k}$. 
In this case, $G$ is a Hilbert matrix, whose condition number grows exponentially with respect to its size $m$ \citep{united1953condition}.
For complex-valued SSMs with $w_j = -0.5 + i v_j$, Theorem \ref{thm: spectrum of G} indicates that if the separation distance $\delta$ remains uniformly large with respect to $m$, then $G$ can be well-conditioned even for larger values of $m$. 
For S4D-Lin initialization, we already know that $0.2 < \lambda_{\min}(G) \leq \lambda_{\max}(G) < \sqrt{2}$ by the above argument.
Therefore, unlike real-valued SSMs, the condition number of $G$ in the complex-valued case can remain well-conditioned even for large $m$, given that the imaginary parts are well separated. 
This difference is illustrated in Figure \ref{fig: real vs complex} (Left), where we compare the exact condition numbers for S4D-Real and S4D-Lin. 
As the scaling factor of the imaginary part increases, the separation distance also increases. 
Figure \ref{fig: real vs complex} (Middle) shows that the Gram matrix $ G $ for S4D-Lin becomes better conditioned, validating Theorem \ref{thm: spectrum of G}.

\textbf{Approximation-estimation tradeoff.}
Despite the fact that complex-valued SSMs with adequately separated imaginary parts $\Im(w)$ enhance the conditioning of $G$, we cannot simply initialize $w$ with widely separated $\Im(w)$. 
This is because $\Im(w)$ determines the frequencies that the SSM can capture, and misaligned frequencies relative to the target ${\rho}^*$ lead to a large approximation error $\mathcal{L}_{c^*}$.
For example, suppose that the target memory function ${\rho}^*(s) = e^{-s/2} \hat{c}^\top \cos(\xi s)$ with $\hat{c}, \xi \in \mathbb{R}^m$.
Let $w = -0.5 + i v$ for $v \in \mathbb{R}^m$, then we have 
\begin{equation*}
    \mathcal{L}_{c^*}
    =
    \int_0^\infty {\rho^*}^2(s)ds - 
      \left(\int_0^\infty e^{-\frac{s}{2}} \cos(vs) {\rho^*}(s)ds\right)^\top G^{-1} \left(\int_0^\infty e^{-\frac{s}{2}} \cos(vs) {\rho^*}(s)ds\right)  = 
      \hat{c}^\top M c,
\end{equation*}
where $M \in \mathbb{R}^{m \times m}$ is given by
\begin{equation*}
    \int_0^\infty e^{-s} \cos(\xi s) \cos(\xi s)^\top ds - \left(\int_0^\infty e^{-s} \cos(\xi s) \cos(vs)^\top ds\right) G^{-1} \left(\int_0^\infty e^{-s} \cos(v s) \cos(\xi s)^\top ds\right).
\end{equation*}
We can see that the maximum singular value $\sigma_{\max}(M)$ of $M$ determines the approximation error. Now, let's consider a limiting case when $ v_j = \mu j $ with $\mu \to \infty$. According to Lemma \ref{lemma: cal integral}, we know that $ G = \frac{1}{2} \mathbb{I}_m $, a scaled identity matrix, possesses the best possible conditioning. Furthermore, if $\xi$ is finite, then as $\mu \to \infty$, $\int_0^\infty e^{-s} \cos(v_j s) \cos(\xi_k s) \, ds = 0$, indicating that the worst approximation error $\int_0^\infty {\rho^*}^2(s) \, ds$.
On the other hand, if we aim to minimize the approximation error, we might align the frequencies such that $ v = \xi $. However, when the target function comprises closely spaced frequencies $\xi_1, \ldots, \xi_m$, such alignment may cause $ G $ to have a large condition number (as per Theorem \ref{thm: spectrum of G}).
Balancing these two aspects reveals an approximation-estimation tradeoff, which is crucial when selecting an SSM initialization. Numerical evidence for this tradeoff is illustrated in Figure \ref{fig: real vs complex} (Right). In this figure, we set $\xi_j = 0.1\pi j$ with a relatively small separation distance $\delta = 0.1\pi$, and we vary the ratio $ v_j / \xi_j $ from $ 2^0 $ to $ 2^8 $. As the ratio increases, the optimization is expected to improve, while the approximation deteriorates.
This trend is shown in Figure \ref{fig: real vs complex} (Right), where the induced Gram matrix $ G $ becomes better-conditioned, whereas the approximation measure $\sigma_{\max}(M)$ increases.
In practice for a specific task, if we manage to accurately recover the memory function from the sequential data, we can then apply a Fourier transform to identify the dominant frequencies of the memory function. 
Given a state size $m$, we can greedily select $m$ frequencies that exhibit the largest separation distance from these dominant frequencies to initialize the imaginary part. 
Our theory suggests that this approach achieves an optimal balance between the tradeoff of approximation and estimation. 
However, practically, recovering the memory function accurately from the data is challenging, and hyperparameter tuning might be needed to find the optimal balance.

\section{Conclusion}

In this paper, we study the question proposed in the Introduction section, focusing on two initialization schemes for state space models (SSMs): the timescale $\Delta$ and the state matrix $W$.
Regarding the timescale $\Delta$, we investigate it from the perspective of training stability at initialization. 
Our findings indicate that its dependency on sequence length is determined by data autocorrelation. 
By analyzing data autocorrelation, we can initialize $\Delta$ to enhance SSM training for tasks involving fixed-length sequences.
For the state matrix $W$, we differentiate between the real part $\Re(W)$ and the imaginary part $\Im(W)$. 
The real part $\Re(W)$ is crucial for capturing long-term memory in temporal data. 
Allowing for a zero real part can effectively mitigate the curse of memory while maintaining training stability at initialization, provided the timescale is appropriately initialized.
The imaginary part $\Im(W)$ affects the conditioning of the SSM optimization problem. 
A well-separated $\Im(W)$ leads to a well-conditioned Gram matrix, improving the convergence rate. 
However, from an approximation standpoint, excessively increasing the separation distance can result in a frequency mismatch between the SSM and the target function, leading to an approximation-estimation tradeoff.
These three components are intricately linked as a \textit{data-dependent} initialization scheme for SSMs.
There are several potential future interesting directions. 
For instance, we have not discussed the effects of gating \citep{mehta2023long} and model depth on the approximation and optimization of SSMs, which we leave for future research.

\bibliography{main}

\newpage

\appendix

\section{Experiments details}\label{appendix: experiment details}

In this section, we provide more experiment details that produce Figure \ref{fig: three plots side by side}, \ref{fig: init for s4d-lin-1}, \ref{fig: init for s4d-lin-0}, \ref{fig: imag init for s4d-lin}, \ref{fig: zero vs negative}, \ref{fig: smnist memory f} and Table \ref{table: lra result} in section \ref{sec: main results}.

\textbf{Figure \ref{fig: three plots side by side} (Left).}
The synthetic dataset that we use to produce Figure \ref{fig: three plots side by side} (Left) is i.i.d. sampled from standard normal distribution with dimension $128$, i.e., each input sequence is of shape ($1, L, 128$) where $L$ is its sequence length.
We use a ZOH discretized diagonal SSM layer (\ref{eq: zoh ssm}) with hidden size $m = 32$, model dimension $d = 128$ to handle the $128$ dimensional dataset.
We initialize the state vector $w$ by S4D-Lin with real part $-0.5$.
The read-out vector $c$ is initialized as i.i.d. standard normal distribution.
We vary $\Delta_{\min}$ and $\Delta_{\max}$ in the SSM layer and use the Adam optimizer \citep{kingma2014adam} to train the hyperparmeters $\Delta, \Re(w), \Im(w), C$ without weight decay.
The learning rate for $\Delta, \Re(w), \Im(w)$ is $0.001$ and the learning rate for $c$ is $0.1$. 

\textbf{Figure \ref{fig: three plots side by side} (Middle), \ref{fig: init for s4d-lin-1}, \ref{fig: init for s4d-lin-0}.}
The synthetic datasets that we use to produce these figures are Gaussian processes with mean zero and varied autocovariance functions $\mathbb{E}[x_i x_j] = K(i, j)$ for $i, j = 1,2,\ldots,L$.
Specifically, the `iid' dataset refers to $K(i, j) = \delta_{i-j}$; the `ou' dataset refers to $K(i, j) = \exp(-|i-j|/2)$; the `rbf' dataset refers to $K(i, j) = \exp(-\pi |i-j|^2)$; and the autocovariance matrix for the `rand' dataset is given by $\Sigma \Sigma^\top/L$ where $\Sigma \in \mathbb{R}^{L \times L}$ is a random matrix with i.i.d. entries sampled from a uniform distribution $U[0, \sqrt{3}]$. 
For all the four synthetic datasets, we have $K(i,i) = 1$.
The plot for Figure \ref{fig: three plots side by side} (Middle) records the maximal eigenvalue of the sample matrix that we fix the data size to be $1000$ and vary the sequence length $L$ as plotted.
So we can see some deviations between theory and practice.
For Figure \ref{fig: init for s4d-lin-1} \& \ref{fig: init for s4d-lin-0}, we also use the $1$-dimensional SSM layer (\ref{eq: zoh ssm}) with S4D-Lin initialization on $\Im(w)$ and vary the real part $\Re(w)$ to be $-0.5$ or $0$.

\textbf{Figure \ref{fig: three plots side by side} (Right), \ref{fig: imag init for s4d-lin}, \ref{fig: zero vs negative} (Right), \ref{fig: smnist memory f}.}
For the resized sequential image datasets, we choose to resize the original images with resize rates $[0.5, 1, 1.5, 2, 2.5, 3, 3.5, 4]$.
Then we standardize the whole images  and flatten them into $1$-d sequence.
For sequential MNIST (sMNIST) dataset, the sequence length is $784 r^2$ and for sequential CIFAR10 (sCIFAR10), the sequence length is $1024 r^2$ where $r$ is the resize rate.
The plot for the maximal eigenvalue of the autocorrelation matrix and the output value are based on the whole training set.
We use the $1$-dimensional SSM layer (\ref{eq: zoh ssm}) with S4D-Lin initialization and vary the real part $\Re(w)$ to be $-0.5$ or $0$ to calculate the output value magnitude.
For the decorrelated sMNIST dataset, we choose the original MNIST dataset and the decorrelation transformation is given by a centered matrix with a whitening matrix after flattening images.
The centered matrix is the mean of the sequential data along the batch dimension, and the whitening matrix has shape $L \times L$.
The whitening matrix can be obtained by SVD on the data matrix.
To train the decorrelated sMNIST dataset, we use a $128$-dimensional SSM layer (\ref{eq: zoh ssm}) with $m=32$ and GELU activation \citep{hendrycks2016gaussian} on the model output, and also apply a gated linear unit after the GELU activation.
The experiment for $\Re(w) = -0.5$ follows the default training setup in \citet{gu2022s4d}, and for the experiment with $\Re(w) = 0$, we initialize all the real part $\Re(w)$ to be zero without any reparameterization method. 
We use dropout with rate $0.1$ and apply a decoder layer for classification.
We use Adam optimizer with learning rate $0.001$ on $\Delta, \Re(w), \Im(w)$ and AdamW optimizer with weight decay $0.01$ on the rest hyperparameters. 
For the plot of the memory function in Figure \ref{fig: smnist memory f}, we solve a least square problem by taking the pseudo inverse of the sequence matrix $X \in \mathbb{R}^{50000 \times 784}$ and then get the recovered memory function $\rho$.
 
\textbf{Figure \ref{fig: zero vs negative} (Left), (Middle).}
The comparisons on zero real part and negative real part in Figure \ref{fig: zero vs negative} (Left) \& (Middle) are conducted on a $1$-dimensional synthetic dataset.
We sample the training and test dataset from i.i.d. standard normal distribution with length $128$.
The training sample size and the test sample size are both $1000$.
We use the SSM layer (\ref{eq: zoh ssm}) with $m=32$, S4D-Lin initialization on $\Im(w)$ and initialize the timescale $\Delta = 1/\sqrt{128}$.
We use Adam optimizer with learning rate $0.001$ on $\Delta, \Re(w), \Im(w)$ and learning rate $0.01$ on $c$.

\textbf{Table \ref{table: lra result}.}
Compared with the default training setup in \citet{gu2022s4d} for S4D models, the only difference is that we
randomly select a fraction $p \in [0, 1]$ over the feature dimension such that the selected state vectors are initialized with zero real part.
For these selected vectors, their corresponding timescale $\Delta_0 \in \mathbb{R}^{p \cdot d}$ is initialized to a constant.
As suggested by Theorem \ref{thm: mag output value}, to ensure the stability, $\Delta_0$ can be chosen to be $\mathcal{O}(1/\sqrt{L \lambda_{\max}(\mathbb{E}[x x^\top])})$.
Here we do not conduct a prior numerical check on the spectrum of the data autocorrelation, so we simply take an upper bound of $\lambda_{\max}(\mathbb{E}[x x^\top])$, which is given by $\mathcal{O}(L)$ if we have normalized the sequences such that $\mathbb{E}[\|x\|^2] = 1$.
In that case, $\Delta_0 = \mathcal{O}(1/L)$.
In the default training setup \citep{gu2022s4d}, the model timescale $\Delta$ is sampled from a uniform distribution $U[\Delta_{\min}, \Delta_{\max}]$ with $\Delta_{\min} \sim \mathcal{O}(1/L)$.
Hence, for the LRA benchmark we simply initialize $\Delta_0$ as a constant $\Delta_{\min}$ as in \citet{gu2022s4d}.

\begin{figure}
    \centering
    \includegraphics[width=\textwidth]{ 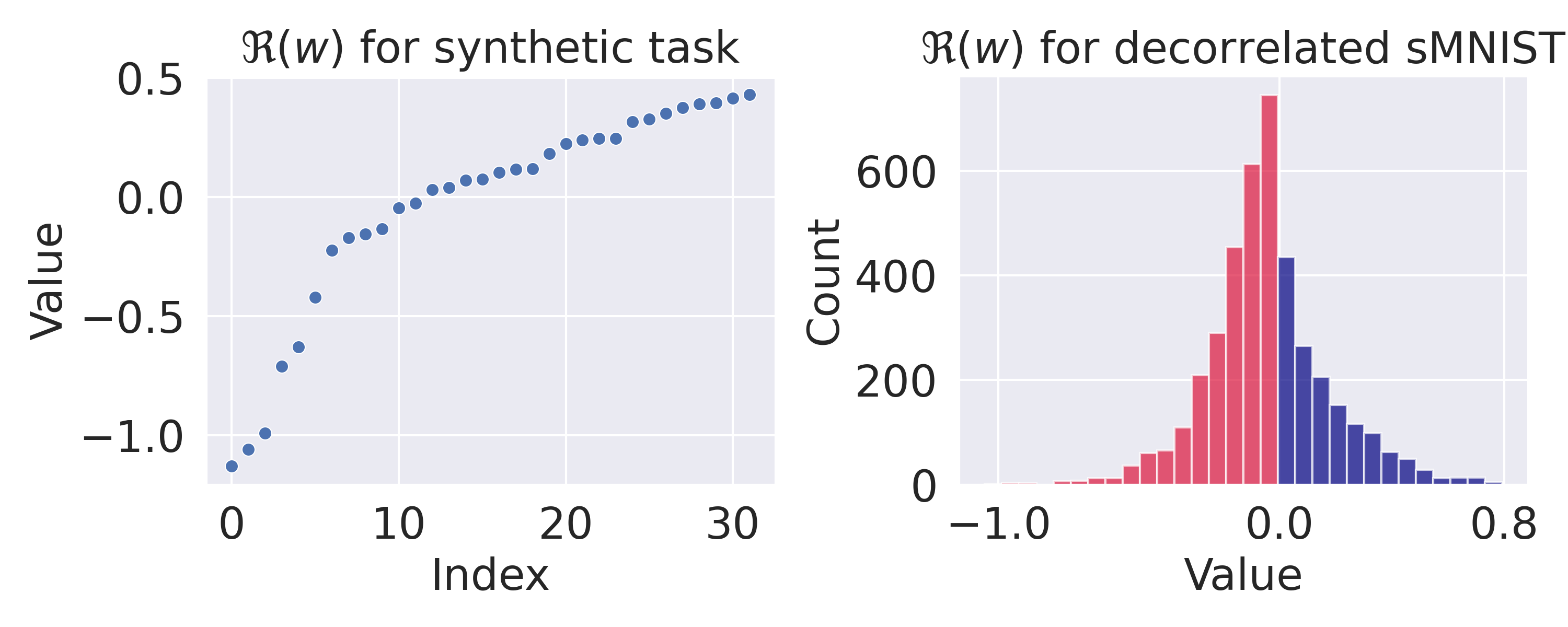}
    \caption{
    Behavior of the real part $\Re(w)$ after training on the synthetic task and the decorrelated sMNIST dataset.
    }
    \label{fig: real after training}
\end{figure}

We also summarize the in other hyperparameters for training S4D on the LRA benchmark in
Table \ref{table: s4d model paramters}.
And we provide the behavior of the real part $\Re(w)$ after training on the synthetic task (ref Figure \ref{fig: zero vs negative} (Left), (Middle)) and the decorrelated sMNIST dataset (ref Figure \ref{fig: zero vs negative} (Right)) in Figure \ref{fig: real after training}. 
We can see that with zero real part at initialization, it is possible that there remains some non-negative real part after training if no reparameterization method is introduced.
But the improvement on the experiments indicates that the model can learn from the data effectively even without constraining the real part values.

\begin{table}
\begin{center}
\resizebox{\textwidth}{!}{%
\begin{tabular}{|c|c|c|c|c|c|c|c|c|}
\hline 
&   $D$ &  $H$ & $N$  &  { Dropout } &  { Learning rate } &  { Batch size } & 
 { Epochs } &  { Weight decay } \\
\hline 
 { ListOps } & 6 & 256 & 4  & 0 & 0.01 & 32 & 40 & 0.05  \\
 \hline
 { Text } & 6 & 256 & 4 &   0 & 0.01 & 16 & 32 & 0.05 \\
 \hline
 { Retrieval } & 6 & 256 & 4 &  0 & 0.01 & 64 & 20 & 0.05  \\
 \hline
 { Image } & 6 & 512 & 64 &  0.1 & 0.01 & 50 & 200 & 0.05 \\
 \hline
 { Pathfinder } & 6 & 256 & 64 & 0.0 & 0.004 & 64 & 200 & 0.05 \\
\hline
 { PathX } & 6 & 256 & 64 & 0.0 & 0.0005 & 16 & 50 & 0.05 \\
 \hline
\end{tabular}
}
\end{center}
\caption{List of the S4D model hyperparameters for the LRA benchmark, where $D, H, N$ denote the depth, feature dimension and hidden state space dimension respectively.}
\label{table: s4d model paramters}
\end{table}

\begin{figure}
    \centering
    \includegraphics[width=\textwidth]{ 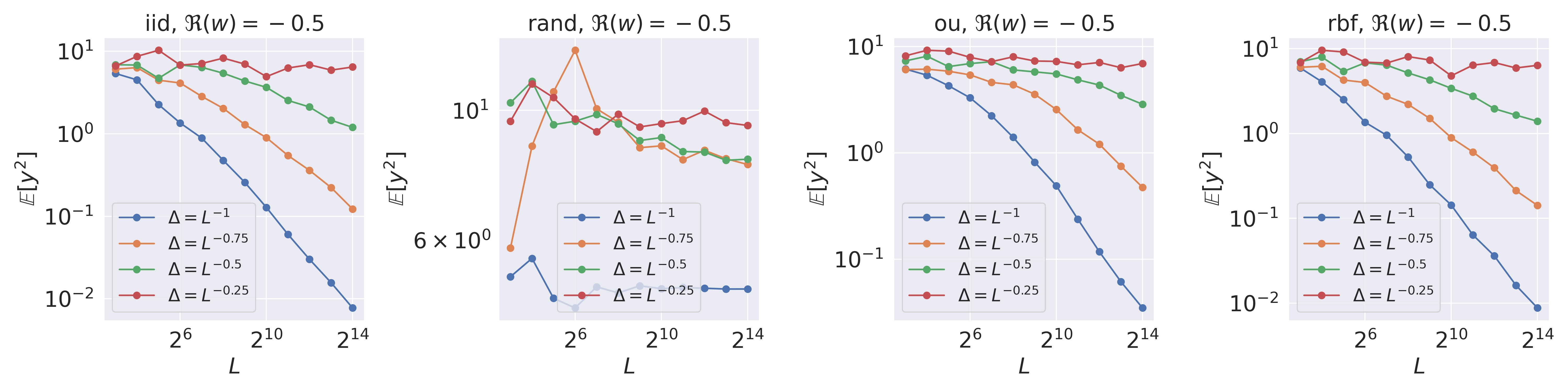}
    \caption{The expected magnitude of the SSM output value on synthetic sequences with S4D-Legs initialization and different autocorrelation.
    The real part $\Re(w) = -0.5$ follows the common practice and we consider four dependencies between the timescale $\Delta$ and the sequence length $L$.}
    \label{fig: init for s4d-legs-1}
\end{figure}

\begin{figure}
    \centering
    \includegraphics[width=\textwidth]{ 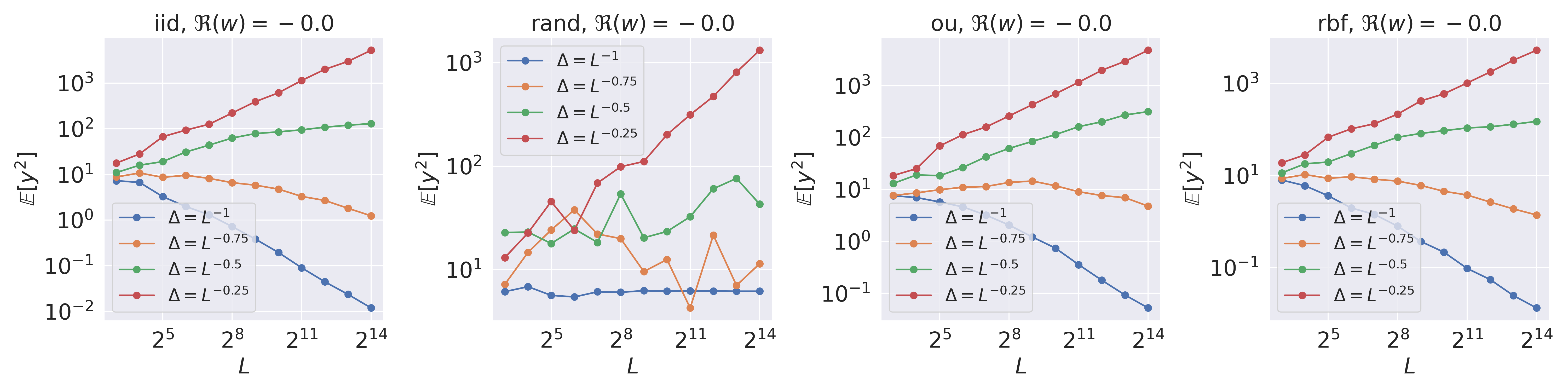}
    \caption{The expected magnitude of the SSM output value on synthetic sequences with S4D-Legs initialization and different autocorrelation and different dependencies between $\Delta$ and $L$.
    The real part $\Re(w)$ is set to be zero.}
    \label{fig: init for s4d-legs-0}
\end{figure}

\begin{figure}
    \centering
    \includegraphics[width=\textwidth]{ 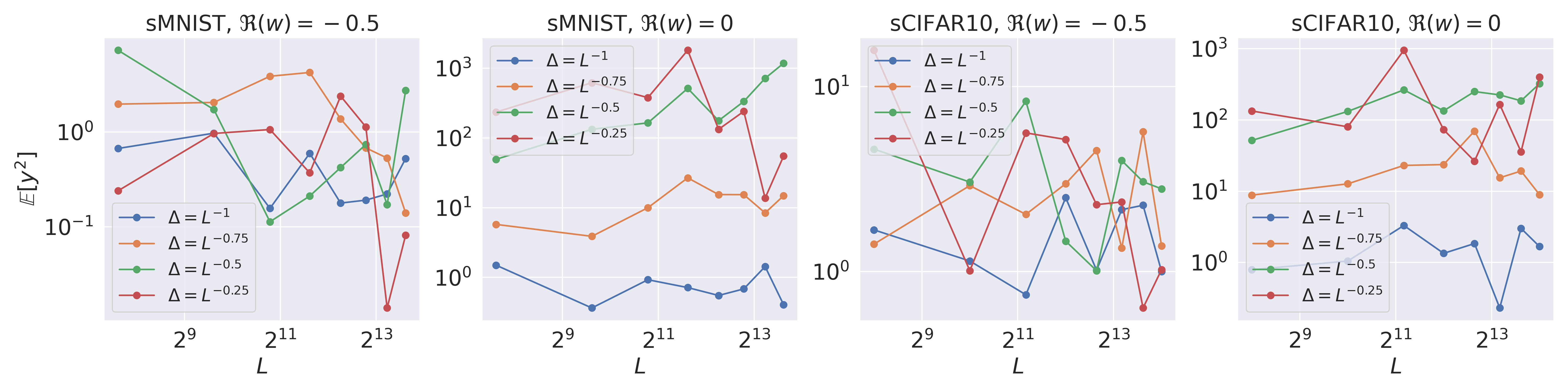}
    \caption{The expected magnitude of the SSM output value for S4D-Legs initialization on sequential image datasets with different resize rates (ranging from $0.5$ to $4$) and different dependencies between $\Delta$ and $L$.}
    \label{fig: imag init for s4d-legs}
\end{figure}

\begin{figure}
    \centering
    \includegraphics[width=\textwidth]{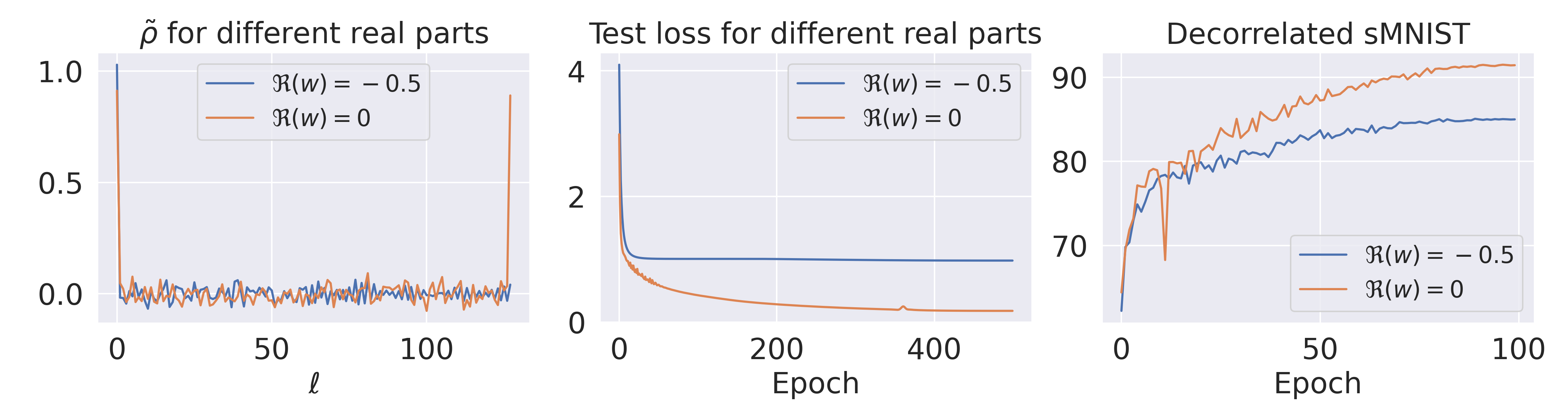}
    \caption{(Left) Training a diagonal SSM (\ref{eq: zoh ssm}) with S4D-Legs initialization on a task that requires long-term memory.
    The learned memory function $\tilde{\rho}$ effectively captures the spike in long-range dependencies. However, it struggles to do so when the real part is negative.
    (Middle) Test loss on the long-term memory task when initializing $\Re(w) = 0$ and $\Re(w) = -0.5$.
    (Right) Test accuracy for training a diagonal SSM with S4D-Legs initialization on decorrelated sequential MNIST dataset with different real parts at initialization.}
    \label{fig: legs zero vs negative}
\end{figure}

\subsection{Additional experiments for S4D-Legs initialization}

In this subsection, we include more experiment results in Figure \ref{fig: init for s4d-legs-1}, \ref{fig: init for s4d-legs-0}, \ref{fig: imag init for s4d-legs}, \ref{fig: legs zero vs negative} for SSMs with S4D-Legs \citep{gu2022s4d} initialization on the imaginary part $\Im(w)$. 
The S4D-Legs initialization is an approximation on the original S4-Legs initialization \citep{gu2022efficiently} by taking diagonal part of the diagonal plus low-rank HiPPO-Legs matrix.
In Figure \ref{fig: init for s4d-legs-1}, \ref{fig: init for s4d-legs-0}, \ref{fig: imag init for s4d-legs}, we plot the magnitude of the SSM output value given the S4D-Legs initialization for both zero real part and negative real part cases.
The experiment settings follow the guidelines we introduce before with only a change on the initialization of $\Im(w)$.
We can see that for S4D-Legs initialization, our conclusion still holds in the sense that negative real part is stable at initialization for all all the scaling that we considered in this paper, while for zero real part, the dependencies of $\Delta$ on $L$ varies for different sequence autocorrelation.
We also compare the effects of real parts on optimization with S4D-Legs initialization.
The results are shown in Figure \ref{fig: legs zero vs negative} and we obtain consistent results as the S4D-Lin initialization.
One interesting finding is that on the decorrelated sMNIST dataset, the comparison between Figure \ref{fig: zero vs negative} (Right) and Figure \ref{fig: legs zero vs negative} (Right) shows that the S4D-Lin initialization outperforms the S4D-Legs initialization in both zero real part and negative real part cases.

\subsection{Ablation studies on the effects of fractions of zero real part at initialization}\label{appendix: ablation studies}

\begin{table}
\begin{center}
\resizebox{\textwidth}{!}{%
\begin{tabular}{|c|c|c|c|c|c|c|}
\hline 
 Initialize a fraction $p$ of $\Re(w)$ to be 0 &   $p=0$   & $p=0.1$    & $p=0.2$        & $p=0.3$     & $p=0.4$  & $p=0.5$  \\
\hline 
 Accuracy & 84.09 (0.47) &  \textbf{84.60} (0.38)  & 84.23 (0.49) & 83.77 (0.46) & 83.50 (0.42) & 83.19 (0.39)  \\
 \hline
 Ratio for $\Re(w) \geq 0$ after training & 0\% & 3.90\% & 7.62\% & 10.85\%  & 14.34\% & 17.25\%  \\
 \hline
\end{tabular}
}
\end{center}
\caption{
Test accuracy for training a 4-layer S4D model in sCIFAR dataset with varied fractions of zero real part at initialization.
}
\label{table: scifar ablation}
\end{table}

In this subsection, we conduct ablation studies on the gray-sCIFAR dataset (with sequence length $1024$) to evaluate the benefit of initializing the real part to zero in multi-layer S4D models, while making minimal modifications. Based on our theory, zeroing the real part can alleviate the \textit{curse of memory} in scenarios where the memory function exhibits a long memory pattern. 
But since we do not have a precise knowledge of the memory function for the gray-sCIFAR dataset, we do ablation studies on the effects of zero real part by varying the fraction $p$ as specified in Section \ref{sec: zero real part}. 
For the selected state vectors with zero real part, their corresponding timescale $\Delta_0 \in \mathbb{R}^{p \cdot d}$ is initialized as a constant $0.001$.
We use a $4$-layer S4D model with feature dimension $128$ and vary $p$ across $[0, 0.1, 0.2, 0.3, 0.4, 0.5]$.
We present both the test accuracy and the ratio of non-negative real part parameters to the total $4 \times 32 \times 128 = 16384$ real part parameters upon completion of training in Table \ref{table: scifar ablation}.
We can see that initializing an appropriate fraction of state vectors with zero real parts enables the model to outperform the default S4D configuration. 
Importantly, even post-training, some non-negative real parts persist, suggesting that the model retains stability and effectively adapts to the data.

\section{Auxiliary Lemmas}

In this section, we provide the description for It\^{o}’s isometry and a few auxiliary lemmas that we will need for the proofs of
Theorem \ref{thm: mag output value},
Proposition \ref{prop: pd of G} and Theorem \ref{thm: spectrum of G}.

\begin{lemma}\label{lemma: estiamte diagonal ez}
    If $\Re(z) \leq 0$, then 
    \begin{equation*}
        \left|\frac{e^z-1}{z}\right| \leq 1.
    \end{equation*}
\end{lemma}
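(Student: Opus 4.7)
The plan is to use the integral representation
\begin{equation*}
\frac{e^{z}-1}{z} = \int_0^1 e^{tz}\,dt,
\end{equation*}
which holds for every $z \in \mathbb{C}$ (interpreting the left-hand side by continuity at $z=0$). This identity follows immediately from the power-series expansion $\sum_{n\ge 0} z^n/(n+1)! = \sum_{n\ge 0} \int_0^1 t^n\,dt \cdot z^n/n!$, or from the fundamental theorem of calculus applied to $t \mapsto e^{tz}/z$ when $z\ne 0$.

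Once this representation is in place, I would apply the triangle inequality for complex-valued integrals to obtain
\begin{equation*}
\left|\frac{e^{z}-1}{z}\right| \;\le\; \int_0^1 \left|e^{tz}\right|dt \;=\; \int_0^1 e^{t\,\Re(z)}\,dt.
\end{equation*}
Since $\Re(z)\le 0$ and $t\in[0,1]$, we have $t\,\Re(z)\le 0$, hence $e^{t\,\Re(z)}\le 1$ pointwise. Integrating over $[0,1]$ gives the desired bound of $1$.

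There is no real obstacle here; the only subtlety is handling the removable singularity at $z=0$, which is dealt with automatically by the integral representation (both sides equal $1$ there). The argument is uniform in the imaginary part $\Im(z)$, so no case split on $\Im(z)$ is needed, and equality is attained precisely when $\Re(z)=0$.
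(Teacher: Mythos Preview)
Your proof is correct and essentially identical to the paper's: both use the integral representation $e^{z}-1=\int_0^z e^{s}\,ds$ (equivalently $(e^{z}-1)/z=\int_0^1 e^{tz}\,dt$ after the substitution $s=tz$), then apply the triangle inequality and the bound $|e^{tz}|=e^{t\Re(z)}\le 1$.

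One small correction to your closing remark: equality is \emph{not} attained for all $z$ with $\Re(z)=0$. For $z=iy$ with $y\ne 0$ one has $\bigl|(e^{iy}-1)/(iy)\bigr|=2|\sin(y/2)|/|y|<1$. Equality in the triangle inequality for $\int_0^1 e^{tz}\,dt$ requires the integrand to have constant phase, which forces $\Im(z)=0$; combined with $\Re(z)=0$ this leaves only $z=0$. This does not affect the validity of the lemma's proof.
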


\begin{proof}
    Notice that 
    \begin{equation*}
        \frac{\left|e^z-1\right|}{|z|} 
        =
        \frac{\left|\int_0^z e^s ds\right|}{|z|}
        \leq 
        \frac{\int_0^z |e^s| |ds|}{|z|}
        =
        \frac{\int_0^z e^{\Re(z)} |ds|}{|z|}
        \leq 
        \frac{\int_0^z |ds|}{|z|}
        =
        1,
    \end{equation*}
    which finishes the proof.
\end{proof}

\begin{lemma}[It\^{o}’s isometry]\label{prop: Ito isometry}
    Let $W : [0, T] \times \Omega \to \mathbb{R}$ denote the canonical real-valued Wiener process defined up to time $T > 0$, and let $X : [0, T] \times \Omega \to \mathbb{R}$ be a stochastic process that is adapted to the natural filtration of the Wiener process.
    Then
    \begin{equation*}
        \mathbb{E}\left[\left(\int_0^T X_t \mathrm{~d} W_t\right)^2\right]=\mathbb{E}\left[\int_0^T X_t^2 \mathrm{~d} t\right],
    \end{equation*}
    where $\mathbb{E}$ denotes expectation with respect to classical Wiener measure.
\end{lemma}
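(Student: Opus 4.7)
The plan is to proceed by the standard two-step construction of the It\^{o} integral: first verify the identity on a dense class of simple (step) processes by direct computation, then extend to general adapted square-integrable processes via an $L^2$ limiting argument.

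First I would define a simple adapted process to be one of the form $X_t = \sum_{i=0}^{n-1} \xi_i \mathbf{1}_{[t_i,t_{i+1})}(t)$ for some partition $0 = t_0 < t_1 < \cdots < t_n = T$, with each $\xi_i$ being $\mathcal{F}_{t_i}$-measurable and square-integrable. For such a process, the stochastic integral is literally the finite sum $I(X) = \sum_{i=0}^{n-1} \xi_i (W_{t_{i+1}} - W_{t_i})$. Expanding $\mathbb{E}[I(X)^2]$ splits into diagonal and off-diagonal contributions. For $i<j$, conditioning on $\mathcal{F}_{t_j}$ makes $\xi_i$, $\xi_j$ and $W_{t_{i+1}}-W_{t_i}$ all measurable, while the outer increment $W_{t_{j+1}}-W_{t_j}$ is independent of $\mathcal{F}_{t_j}$ with mean zero, so the off-diagonal term vanishes. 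For $i=j$, conditioning on $\mathcal{F}_{t_i}$ and using that $(W_{t_{i+1}}-W_{t_i})^2$ is independent of $\mathcal{F}_{t_i}$ with expectation $t_{i+1}-t_i$ gives $\mathbb{E}[\xi_i^2(W_{t_{i+1}}-W_{t_i})^2] = \mathbb{E}[\xi_i^2](t_{i+1}-t_i)$. Summing yields $\mathbb{E}[I(X)^2] = \sum_i \mathbb{E}[\xi_i^2](t_{i+1}-t_i) = \mathbb{E}\bigl[\int_0^T X_t^2\,dt\bigr]$, which is the isometry on simple processes.

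Second, I would invoke the density of simple processes in the Hilbert space $\mathcal{L}^2$ of progressively measurable processes satisfying $\mathbb{E}\bigl[\int_0^T X_t^2\,dt\bigr] < \infty$, with norm $\|X\|_{\mathcal{L}^2} = \bigl(\mathbb{E}[\int_0^T X_t^2\,dt]\bigr)^{1/2}$. Given a general $X \in \mathcal{L}^2$, choose simple $X^{(n)}$ with $\|X - X^{(n)}\|_{\mathcal{L}^2} \to 0$. By the isometry just established, $\{I(X^{(n)})\}$ is Cauchy in $L^2(\Omega)$, and we define $\int_0^T X_t\,dW_t$ as its $L^2$-limit. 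Since both sides of the isometry are continuous in the $\mathcal{L}^2$ norm, the identity passes to the limit, giving the general case.

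The main obstacle is the density step: showing that simple processes are dense in $\mathcal{L}^2$ is not a single trick but a chain of approximations, typically truncating $X$ to a bounded process, mollifying in time to obtain a continuous approximation, and then sampling on a refining partition to produce a step function. The delicate point at each stage is preserving adaptedness, since mollification must be backward in time (integrating only past values) to avoid using future information. Once density is granted, the rest of the argument is purely algebraic, resting on the orthogonality of disjoint Brownian increments used in the simple-process computation above.
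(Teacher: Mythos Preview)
Your proof is the standard textbook construction and is correct. Note, however, that the paper does not actually prove this lemma: it is stated in the auxiliary lemmas section as the classical It\^{o} isometry without proof, serving only as a cited tool to rewrite the expected risk in Section~\ref{sec: imag part}. So there is no ``paper's own proof'' to compare against; your argument simply supplies what the authors take as background.
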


\begin{lemma}[Gershgorin circle theorem]\label{lemma: Gershgorin circle theorem}
    Let $A$ be a complex $n \times n$ matrix, with entries $a_{ij}$. For $i \in \{1,\ldots,n\}$, let $R_i$ be the sum of the absolute value of the non-diagonal entries in the $i$-th row: $R_i = \sum_{j\neq i}|a_{ij}|$. 
    Let $D(a_{ii}, R_i) \subseteq \mathbb{C}$ be a closed disc centered at $a_{ii}$ with radius $R_i$. 
    Then every eigenvalue of $A$ lies within at least one of the discs $D(a_{ii}, R_i)$.
\end{lemma}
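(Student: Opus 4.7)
The plan is to prove this classical eigenvalue localization result by the standard ``largest coordinate of the eigenvector'' argument. Given any eigenvalue $\lambda$ of $A$ with associated eigenvector $v \neq 0$, I would choose an index $i^\star$ at which $|v_{i^\star}| = \max_j |v_j|$; because $v$ is not the zero vector, this automatically guarantees $|v_{i^\star}| > 0$. Reading off the $i^\star$-th coordinate of the identity $Av = \lambda v$ yields $\sum_j a_{i^\star j} v_j = \lambda v_{i^\star}$, which I would rearrange to isolate the diagonal entry as $(\lambda - a_{i^\star i^\star})\, v_{i^\star} = \sum_{j \neq i^\star} a_{i^\star j} v_j$.

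From there the proof is just two inequalities. Applying the triangle inequality to the right-hand side and then using the maximality $|v_j| \leq |v_{i^\star}|$ gives
\begin{equation*}
|\lambda - a_{i^\star i^\star}|\, |v_{i^\star}| \;\leq\; \sum_{j \neq i^\star} |a_{i^\star j}|\, |v_j| \;\leq\; R_{i^\star}\, |v_{i^\star}|.
\end{equation*}
Dividing through by $|v_{i^\star}| > 0$ produces $|\lambda - a_{i^\star i^\star}| \leq R_{i^\star}$, placing $\lambda$ inside the disc $D(a_{i^\star i^\star}, R_{i^\star})$. Since the eigenvalue $\lambda$ was arbitrary, every eigenvalue lies in at least one such disc, as claimed.

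There is really no main obstacle here: the argument is textbook and the only point that requires even a moment's care is to justify the final division, which is handled by noting that a nonzero eigenvector must have at least one nonzero coordinate, so its maximum coordinate in absolute value is strictly positive. If one wished to extend the conclusion to column discs, the same argument applied to $A^\top$ (which shares the spectrum of $A$) does the job, but the statement as formulated in the lemma is already the version used in the paper's proof of Theorem~\ref{thm: spectrum of G}.
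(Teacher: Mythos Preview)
Your proof is correct and is the standard textbook argument for Gershgorin's theorem. The paper itself does not supply a proof of this lemma; it simply states the result as a classical auxiliary fact (alongside It\^{o}'s isometry and the Hanson--Wright inequality, which are likewise quoted without proof) and then invokes it in the proof of Theorem~\ref{thm: spectrum of G}. So there is nothing to compare against: your argument fills in what the paper leaves as background, and it does so cleanly.
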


\begin{lemma}\label{lemma: Basel_problem}
    For any $t \in \mathbb{R}$, 
    \begin{equation*}
        \sum_{n=1}^\infty \frac{1}{n^2+t^2} = -\frac{1}{2t^2} + \frac{\pi}{2t} \coth(\pi t).
    \end{equation*}
\end{lemma}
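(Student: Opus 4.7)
The plan is to derive the identity from the standard Mittag--Leffler/product expansion of the hyperbolic sine. Concretely, I would invoke the classical Weierstrass product
\begin{equation*}
    \sinh(\pi t) = \pi t \prod_{n=1}^\infty \left(1 + \frac{t^2}{n^2}\right),
\end{equation*}
which converges absolutely for every $t \in \mathbb{R}$. Taking the logarithm on both sides (for $t > 0$, extending to general $t$ by parity and continuity) yields
\begin{equation*}
    \log \sinh(\pi t) = \log(\pi t) + \sum_{n=1}^\infty \log\!\left(1 + \frac{t^2}{n^2}\right),
\end{equation*}
and the series on the right converges locally uniformly in $t$ away from the poles $t = \pm i n$.

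Next I would differentiate with respect to $t$. The left-hand side gives $\pi \coth(\pi t)$, and the isolated $\log(\pi t)$ term contributes $1/t$. For the series, each summand differentiates to
\begin{equation*}
    \frac{d}{dt}\log\!\left(1+\frac{t^2}{n^2}\right) = \frac{2t/n^2}{1+t^2/n^2} = \frac{2t}{n^2+t^2},
\end{equation*}
and termwise differentiation is justified by the uniform convergence of $\sum 2t/(n^2+t^2)$ on compact subsets of $\mathbb{R}$. This produces
\begin{equation*}
    \pi \coth(\pi t) = \frac{1}{t} + \sum_{n=1}^\infty \frac{2t}{n^2+t^2}.
\end{equation*}
Dividing both sides by $2t$ and rearranging yields exactly the claimed formula
\begin{equation*}
    \sum_{n=1}^\infty \frac{1}{n^2+t^2} = -\frac{1}{2t^2} + \frac{\pi}{2t}\coth(\pi t),
\end{equation*}
valid for every nonzero real $t$; the case $t = 0$ follows by continuity, with both sides tending to $\pi^2/6$ using the Laurent expansion $\coth(\pi t) = 1/(\pi t) + \pi t/3 + O(t^3)$.

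The main obstacle is purely technical rather than conceptual: one must justify that the logarithm and the termwise derivative commute with the infinite product/series. This is handled by noting that for $|t| \leq R$ the tail of $\sum \log(1+t^2/n^2)$ is dominated by $\sum R^2/n^2$, and similarly the differentiated series is dominated by $\sum 2R/(n^2-R^2)$ for $n > R$, both of which are summable. If one prefers to avoid invoking the Weierstrass product as a black box, an alternative route is to substitute $z = it$ into the Mittag--Leffler partial fraction expansion $\pi \cot(\pi z) = 1/z + \sum_{n=1}^\infty 2z/(z^2 - n^2)$ and use $\cot(i x) = -i\coth(x)$; after simplification this produces the same identity, and either derivation is equally short.
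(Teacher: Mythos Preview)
Your argument is correct and complete. The paper does not actually prove this lemma; it simply states that the identity is a side result of the Basel problem and refers the reader to a Wikipedia page, omitting any details. Your derivation via the Weierstrass product for $\sinh(\pi t)$, logarithmic differentiation, and the limiting argument at $t=0$ is a standard and fully rigorous route, so in this case you have supplied more than the paper itself does.
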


\begin{proof}
    This is a side result of the Basel problem.
    The related proof can be found in the \href{https://en.wikipedia.org/wiki/Basel_problem#A_proof_using_Euler's_formula_and_L'Hôpital's_rule}{Wiki page}.
    We omit it here.
\end{proof}

\begin{lemma}\label{lemma: cal integral}
    For any $v_j, v_k \in \mathbb{R}$, we have 
    \begin{equation*}
        \int_0^\infty e^{-s} \cos(v_j s) \cos(v_k s) ds
        =
        \frac{1}{2} \left(\frac{1}{1+(v_j-v_k)^2} + \frac{1}{1+(v_j+v_k)^2}\right).
    \end{equation*}
\end{lemma}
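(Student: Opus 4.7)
The plan is to reduce the integrand to a sum of two simpler cosines and then apply a standard Laplace-type formula. The product-to-sum identity gives
\begin{equation*}
\cos(v_j s)\cos(v_k s) = \tfrac{1}{2}\bigl[\cos((v_j-v_k)s) + \cos((v_j+v_k)s)\bigr],
\end{equation*}
so the integral splits into two pieces of the form $\int_0^\infty e^{-s}\cos(\alpha s)\,ds$ with $\alpha = v_j-v_k$ and $\alpha = v_j+v_k$.

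Next I would compute the generic integral $\int_0^\infty e^{-s}\cos(\alpha s)\,ds = \tfrac{1}{1+\alpha^2}$. The cleanest route is to write $\cos(\alpha s) = \Re(e^{i\alpha s})$ and evaluate
\begin{equation*}
\int_0^\infty e^{(-1+i\alpha)s}\,ds = \frac{1}{1-i\alpha},
\end{equation*}
whose real part is $1/(1+\alpha^2)$. Here convergence at infinity is immediate because $|e^{(-1+i\alpha)s}| = e^{-s}$.

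Substituting $\alpha = v_j - v_k$ and $\alpha = v_j + v_k$ and combining with the $1/2$ factor from the product-to-sum identity yields exactly
\begin{equation*}
\tfrac{1}{2}\left(\frac{1}{1+(v_j-v_k)^2} + \frac{1}{1+(v_j+v_k)^2}\right),
\end{equation*}
which is the claimed identity. There is no real obstacle here; the only care needed is to ensure the complex integral is interpreted as $\lim_{R\to\infty}\int_0^R$ and to justify taking the real part outside the (absolutely convergent) integral, which follows from Fubini or bounded convergence applied to the real and imaginary parts separately.
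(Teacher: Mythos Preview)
Your proposal is correct and follows essentially the same approach as the paper: apply the product-to-sum identity to split the integrand, then evaluate each $\int_0^\infty e^{-s}\cos(\alpha s)\,ds$ by writing $\cos(\alpha s)=\Re(e^{i\alpha s})$, integrating the complex exponential to $\tfrac{1}{1-i\alpha}$, and taking the real part.
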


\begin{proof}
    Notice that 
    \begin{align*}
        & \int_0^\infty 
        e^{-s} \cos(v_js)\cos(v_ks)ds \\
        = & \frac{1}{2} \int_0^\infty 
        e^{-s} \cos((v_j-v_k)s)ds +
        \frac{1}{2} \int_0^\infty 
        e^{-s} \cos((v_j+v_k)s)ds \\
        = & \frac{1}{2}
        \int_0^\infty \Re \left(\exp\left(-s + i \cdot (v_j-v_k)s\right)\right)ds + \frac{1}{2}
        \int_0^\infty \Re \left(\exp\left(-s + i \cdot (v_j+v_k)s\right)\right)ds \\
        = & \frac{1}{2} \Re \left(\frac{1}{1-i \cdot (v_j-v_k)} + \frac{1}{1-i \cdot (v_j+v_k)}\right) \\
        = & \frac{1}{2} \left(\frac{1}{1+(v_j-v_k)^2} + \frac{1}{1+(v_j+v_k)^2}\right).
    \end{align*}
\end{proof}

\begin{lemma}[Hanson-Wright inequality]\label{lemma: Hanson-Wright inequality}
    Let $X = (X_1, \ldots, X_n) \in \mathbb{R}^n$ be a random vector with independent components $X_i$ which satisfy $\mathbb{E} X_i = 0$ and $\|X_i\|_{\psi_2} \leq K$. Let $A$ be an $n \times n$ matrix. Then, for every $t \geq 0$,
\[
\mathbb{P} \left\{ \left| X^\top A X - \mathbb{E} X^\top A X \right| > t \right\} \leq 2 \exp \left[ - c \min \left( \frac{t^2}{K^4 \|A\|_{F}^2}, \frac{t}{K^2 \|A\|} \right) \right],
\]
where $c$ is a positive absolute constant and the subgaussian norm $\|\cdot\|_{\psi_2}$ is defined as 
\begin{equation*}
    \|\xi\|_{\psi_2} = \sup_{p \geq 1} p^{-1/2} (\mathbb{E}|\xi|^p)^{1/p}.
\end{equation*}
In particular, if $\xi$ is a standard normal distribution, then $\|\xi\|_{\psi_2} = \sqrt{8/3}$.
\end{lemma}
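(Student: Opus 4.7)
The plan is to establish this classical bound via the Rudelson--Vershynin decoupling approach, which is the cleanest modern proof. First, by homogeneity one may assume $K=1$ after the rescaling $X_i \mapsto X_i/K$ (which effectively replaces $A$ by $K^2 A$), so it suffices to show
\begin{equation*}
\mathbb{P}\bigl(|X^{\top}AX - \mathbb{E}X^{\top}AX| > t\bigr) \le 2\exp\!\left(-c\min\!\Bigl(\tfrac{t^2}{\|A\|_F^2},\ \tfrac{t}{\|A\|}\Bigr)\right)
\end{equation*}
for $X$ with $\|X_i\|_{\psi_2} \le 1$. Next, I would split the quadratic form into its diagonal and off-diagonal contributions,
\begin{equation*}
X^{\top}AX - \mathbb{E}X^{\top}AX = \sum_{i=1}^{n} A_{ii}(X_i^2 - \mathbb{E}X_i^2) + \sum_{i \neq j} A_{ij} X_i X_j \;=:\; S_{\mathrm{diag}} + S_{\mathrm{off}},
\end{equation*}
and bound each at level $t/2$, combining them by a union bound.

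For $S_{\mathrm{diag}}$, each summand $A_{ii}(X_i^2 - \mathbb{E}X_i^2)$ is independent, centered, and sub-exponential, because $\|X_i\|_{\psi_2}\le 1$ implies $\|X_i^2 - \mathbb{E}X_i^2\|_{\psi_1} \lesssim 1$ by the product rule for Orlicz norms. Bernstein's inequality for sums of independent centered sub-exponentials then gives
\begin{equation*}
\mathbb{P}(|S_{\mathrm{diag}}| > t/2) \le 2\exp\!\left(-c\min\!\Bigl(\tfrac{t^2}{\sum_i A_{ii}^2},\ \tfrac{t}{\max_i |A_{ii}|}\Bigr)\right),
\end{equation*}
which is absorbed into the claimed bound since $\sum_i A_{ii}^2 \le \|A\|_F^2$ and $\max_i |A_{ii}| \le \|A\|$.

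The crux is the off-diagonal term. I would first apply a decoupling inequality (de la Pe\~na--Gin\'e) to reduce the estimation of $S_{\mathrm{off}}$ to that of the decoupled bilinear form $X^{\top}\tilde A Y$, where $Y$ is an independent copy of $X$ and $\tilde A$ agrees with $A$ off the diagonal and vanishes on it; this preserves $\|\tilde A\|_F \le \|A\|_F$ and $\|\tilde A\| \le 2\|A\|$. Conditioning on $Y$, the variable $X^{\top}\tilde A Y = \sum_i X_i (\tilde A Y)_i$ is a sum of independent sub-gaussians, so
\begin{equation*}
\mathbb{E}\bigl[\exp(\lambda X^{\top}\tilde A Y) \mid Y\bigr] \le \exp\!\bigl(c\lambda^2 \|\tilde A Y\|_2^2\bigr).
\end{equation*}
Taking expectation over $Y$ reduces the problem to controlling the MGF of the quadratic form $Y^{\top}BY$ with $B = \tilde A^{\top}\tilde A$.

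The main obstacle is precisely this quadratic-form MGF estimate, which I would either cite or re-derive by diagonalizing $B = U\Lambda U^{\top}$, exploiting that the coordinates of $U^{\top}Y$ remain independent and sub-gaussian, and then computing a product of scalar sub-exponential MGFs. This yields
\begin{equation*}
\mathbb{E}\exp\!\bigl(\mu (Y^{\top}BY - \mathbb{E}Y^{\top}BY)\bigr) \le \exp\!\bigl(C\mu^2 \|B\|_F^2\bigr) \quad\text{whenever } |\mu|\,\|B\| \le c_0.
\end{equation*}
Setting $\mu \asymp \lambda^2$, using $\|B\|_F \le \|\tilde A\|\,\|\tilde A\|_F \lesssim \|A\|\|A\|_F$ and $\|B\| \le \|\tilde A\|^2 \lesssim \|A\|^2$, applying Chernoff's inequality $\mathbb{P}(S_{\mathrm{off}}>t) \le e^{-\lambda t}\,\mathbb{E}e^{\lambda S_{\mathrm{off}}}$, and optimizing $\lambda$ produces the two-regime exponent: for small $t$, the unconstrained optimum $\lambda \asymp t/\|A\|_F^2$ yields the sub-gaussian term $t^2/\|A\|_F^2$, while for large $t$ the admissibility constraint $\lambda^2\|B\|\le c_0$ saturates and gives the sub-exponential term $t/\|A\|$. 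Combining this with the diagonal bound and readjusting the absolute constant $c$ delivers the stated inequality.
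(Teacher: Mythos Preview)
The paper itself does not prove this lemma; its entire proof is the sentence ``We refer the proof to \citet{rudelson2013hanson}.'' Your outline is essentially the Rudelson--Vershynin argument (normalize to $K=1$, split into diagonal and off-diagonal parts, Bernstein on the diagonal, decoupling plus Chernoff on the off-diagonal), so in that sense you are supplying what the paper omits.

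There is, however, a genuine error in your proposed re-derivation of the quadratic MGF bound. You write that after diagonalizing $B=U\Lambda U^{\top}$ you would exploit ``that the coordinates of $U^{\top}Y$ remain independent and sub-gaussian''. Sub-gaussianity does survive an orthogonal change of basis, but independence does \emph{not} unless $Y$ is Gaussian: for i.i.d.\ Rademacher $Y_1,Y_2$ and a $\pi/4$ rotation, the transformed coordinates $(Y_1\pm Y_2)/\sqrt{2}$ are visibly dependent. Hence the MGF of $Y^{\top}BY$ cannot be factored as a product of scalar sub-exponential MGFs, and the step you yourself flagged as ``the main obstacle'' is not actually resolved by your sketch.

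The Rudelson--Vershynin proof closes exactly this gap by inserting a Gaussian comparison before any diagonalization: one shows that the MGF of the decoupled bilinear form $X^{\top}\tilde A Y$ is dominated, up to absolute constants in the exponent, by that of $g^{\top}\tilde A g'$ with $g,g'$ independent standard Gaussians, and only then uses rotation invariance (via the SVD of $\tilde A$) to reduce to a product of one-dimensional Gaussian integrals. If you add this comparison step, or take the option you already mention of citing the quadratic MGF bound rather than re-deriving it, the remainder of your argument goes through.
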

\begin{proof}
    We refer the proof to \citet{rudelson2013hanson}.
\end{proof}

\section{Proof of Theorem \ref{thm: mag output value}}\label{sec: proof of thm mat output value}

In this section, we prove the upper bound on the second moment of the model output value in Theorem \ref{thm: mag output value}.

\begin{proof}
First, we may express the model output $y_L$ in a matrix form.
To do so, we rewrite $c$ as a $2m \times 1$ vector $(\Re(c_1),\ldots,\Re(c_m),\Im(c_1),\ldots\Im(c_m))^\top$ that contains the real and imaginary part of $c$, and let $V$ to be a $2m \times L$ Vandermonde-like matrix 
\begin{equation*}
V := 
\begin{bmatrix}
    \Re \left(\frac{e^{\Delta w_1}-1}{\Delta w_1} e^{\Delta w_1 0}\right) & \Re \left(\frac{e^{\Delta w_1}-1}{\Delta w_1} e^{\Delta w_1 1}\right) & \cdots & \Re \left(\frac{e^{\Delta w_1}-1}{\Delta w_1} e^{\Delta w_1 (L-1)}\right) 
    \\
    \vdots & \vdots &  & \vdots & 
    \\
    \Re \left(\frac{e^{\Delta w_m}-1}{\Delta w_m} e^{\Delta w_m 0}\right) & \Re \left(\frac{e^{\Delta w_m}-1}{\Delta w_m} e^{\Delta w_m 1}\right) & \cdots & \Re \left(\frac{e^{\Delta w_m}-1}{\Delta w_m} e^{\Delta w_m (L-1)}\right) \\
    -\Im \left(\frac{e^{\Delta w_1}-1}{\Delta w_1} e^{\Delta w_1 0}\right) & -\Im \left(\frac{e^{\Delta w_1}-1}{\Delta w_1} e^{\Delta w_1 1}\right) & \cdots & -\Im \left(\frac{e^{\Delta w_1}-1}{\Delta w_1} e^{\Delta w_1 (L-1)}\right) \\
    \vdots & \vdots &  & \vdots & 
    \\
    -\Im \left(\frac{e^{\Delta w_m}-1}{\Delta w_m} e^{\Delta w_m 0}\right) & -\Im \left(\frac{e^{\Delta w_m}-1}{\Delta w_m} e^{\Delta w_m 1}\right) & \cdots & -\Im \left(\frac{e^{\Delta w_m}-1}{\Delta w_m} e^{\Delta w_m (L-1)}\right)
\end{bmatrix}.
\end{equation*}
Then $y_L$ can be written in a matrix form
\begin{equation*}
    y_L = \Delta \cdot c^\top V J x,
\end{equation*}
where $J \in \mathbb{R}^{L\times L}$ is a row reversed identity matrix, i.e.
\begin{equation*}
    J = 
    \left(
    \begin{matrix}
        0 & \cdots & 0 & 1 \\
        0 & \cdots & 1 & 0 \\
        \vdots & \iddots & \vdots & \vdots \\
        1 & \cdots & 0 & 0 
    \end{matrix}
    \right).
\end{equation*}
Furthermore, we may connect $V$ with a standard Vandermonde matrix $V_L$, by noticing that 
\begin{equation*}
    \Phi V = 
    D V_L,
\end{equation*}
where $V_L$ is a $2m \times L$ complex Vandermonde matrix with $2m$ nodes $e^{\Delta w_1}, e^{\Delta \Bar{w}_1}, \ldots, e^{\Delta w_m}, e^{\Delta \Bar{w}_m}$:
\begin{equation*}
   V_L = \left(
    \begin{matrix}
        1 & e^{\Delta \Bar{w}_1} & \cdots & e^{\Delta \Bar{w}_1 (L-1)} \\
        \vdots & \vdots & \cdots & \vdots \\ 
        1 & e^{\Delta \Bar{w}_m} & \cdots & e^{\Delta \Bar{w}_m (L-1)} \\
        1 & e^{\Delta w_1} & \cdots & e^{\Delta w_1 (L-1)} \\
        \vdots & \vdots & \cdots & \vdots \\ 
        1 & e^{\Delta w_m} & \cdots & e^{\Delta w_m (L-1)} \\
    \end{matrix}
    \right) \in \mathbb{C}^{2m \times L},
\end{equation*}
$\Phi$ is a scaled unitary matrix
\begin{equation*}
  \Phi := 
  \left(\begin{array}{c|c}
\begin{matrix} 1 & 0 & \cdots & 0 \\ 0 & 1 & \cdots & 0 \\ \vdots & \vdots & \ddots & \vdots \\ 0 & 0 & \cdots & 1 \end{matrix} & 
\begin{matrix} i & 0 & \cdots & 0 \\ 0 & i & \cdots & 0 \\ \vdots & \vdots & \ddots & \vdots \\ 0 & 0 & \cdots & i \end{matrix} \\ \hline
\begin{matrix} 1 & 0 & \cdots & 0 \\ 0 & 1 & \cdots & 0 \\ \vdots & \vdots & \ddots & \vdots \\ 0 & 0 & \cdots & 1 \end{matrix} & 
\begin{matrix} -i & 0 & \cdots & 0 \\ 0 & -i & \cdots & 0 \\ \vdots & \vdots & \ddots & \vdots \\ 0 & 0 & \cdots & -i \end{matrix}
\end{array}\right) \in \mathbb{C}^{2m \times 2m}.
\end{equation*}
with $\Phi \Phi^H = \Phi^H \Phi = 2 \mathbb{I}_{2m}$,
and $D$ is a diagonal matrix
\begin{equation*}
    D =
    \left( \begin{array}{cccccc}
    {\frac{e^{\Delta \Bar{w}_1}-1}{\Delta \Bar{w}_1}} &  &  &  & & \\
     & \ddots &  &  & &  \\
     &  & {\frac{e^{\Delta \Bar{w}_m}-1}{\Delta \Bar{w}_m}} & & & \\
     &  &  & {\frac{e^{\Delta {w}_1}-1}{\Delta {w}_1}} & & \\
     & &  &  & \ddots &  \\
     &  &  &  & & {\frac{e^{\Delta {w}_m}-1}{\Delta {w}_m}} \\
    \end{array} \right)
    \in \mathbb{C}^{2m \times 2m}.
\end{equation*}
Hence, we have $V = \frac{1}{2} \Phi^H D V_L$.
Notice that both $\Re(\Delta w_j)$ and $\Re(\Delta \Bar{w}_j)$ are non-positive, then by Lemma \ref{lemma: estiamte diagonal ez} we have $\|D\| \leq 1$.
Now combining it with $V = \frac{1}{2} \Phi^H D V_L$ and the fact that the exchange matrix $J$ is an orthogonal matrix, then when $\Re(w_j) \leq 0$ for all $j$, we have
\begin{align*}
    \mathbb{E}_{c, x}[y_L^2] & = \mathbb{E}_{c, x} \left[\left(\Delta \cdot c^\top V J x\right)^2\right] 
    \\
    & = \Delta^2 \mathbb{E}_c \left[c^\top V J \mathbb{E}_x[x x^\top] J V^\top c\right] \\
    & \leq 
    \frac{\Delta^2}{2} \mathbb{E}_c [\|c\|^2] \lambda_{\max} (\mathbb{E}[x x^\top]) 
    \lambda_{\max} (V_L V_L^H) \\
    & \leq 
    \frac{\Delta^2 m}{2} \lambda_{\max} (\mathbb{E}[x x^\top]) 
    \Tr (V_L V_L^H) \\
    & = \Delta^2 m \lambda_{\max} (\mathbb{E}[x x^\top]) 
    \sum_{j=1}^m \left(\left(e^{\Delta \Re(w_j)}\right)^0 + \cdots + \left(e^{\Delta \Re(w_j)}\right)^{L-1}\right) \\
    & = \Delta^2 m^2 L \lambda_{\max} (\mathbb{E}[x x^\top]),
\end{align*}
which finishes the proof.
\end{proof}

It is also possible to derive a high-probability bound using Lemma \ref{lemma: Hanson-Wright inequality} (the Hanson-Wright inequality). 
It's important to note that we do not make any assumptions about the input sequential data; instead, we only assume that the read-out vector $c$ is i.i.d. Gaussian, as stated in Theorem \ref{thm: mag output value}. 
This allows us to apply the high-probability bound to the expression $c^\top V J \mathbb{E}_x[x x^\top] J V^\top c$, where $V J \mathbb{E}_x[x x^\top] J V^\top$ is a deterministic matrix. 
By applying the Hanson-Wright inequality in Lemma \ref{lemma: Hanson-Wright inequality}, we take a $\delta > 0$, and let $A = V J \mathbb{E}_x[x x^\top] J V^\top, K = \sqrt{8/3}$, then by solving $\frac{t}{K^2 \|A\|} = \frac{\log(2/\delta)}{c}$, we have $t = \frac{8 \log(2/\delta)}{3c} \|A\|$.
Then for small enough $\delta$, i.e., for large enough $t$, we have $\frac{t^2}{K^4 \|A\|_{F}^2} > \frac{t}{K^2 \|A\|}$.
Therefore, by Lemma \ref{lemma: Hanson-Wright inequality} we get with probability at least $1-\delta$,
\begin{align*}
    \mathbb{E}_x[y_L^2] & \leq \Delta^2 m^2 L \lambda_{\max} (\mathbb{E}[x x^\top]) + \frac{8 \Delta^2 \log(2/\delta)}{3c} \|A\| \\
    & \leq 
    \Delta^2 m^2 L \lambda_{\max} (\mathbb{E}[x x^\top]) + 
    \frac{4 \Delta^2 \log(2/\delta)}{3c}
    \lambda_{\max} (\mathbb{E}[x x^\top]) 
    \Tr (V_L V_L^H) \\
    & \lesssim \Delta^2 m L \lambda_{\max} (\mathbb{E}[x x^\top]) \left(m + \frac{\log(1/\delta)}{c}\right),
\end{align*}
where $\lesssim$ hides a positive absolute constant.

\section{Proof of Proposition \ref{prop: pd of G}}\label{sec: proof prop pd of G}

In this section, we show the proof for Proposition \ref{prop: pd of G}.
\begin{proof}
    Since $G_{j, k} = \int_0^\infty \Re(e^{w_js}) \Re(e^{w_ks})ds$, then for any $\xi \in \mathbb{R}^m$, we have 
    \begin{equation*}
        \xi^\top G \xi = \int_0^\infty \left(\sum_{j=1}^m \xi_j \Re(e^{w_js})\right)^2 ds \geq 0.
    \end{equation*}
    Hence, $G$ is positive semi-definite for both real-valued $w$ and complex-valued $w$.
    Let $\xi^\top G \xi = 0$, then 
    $\sum_{j=1}^m \xi_j \Re(e^{w_js}) = 0$ for $s \geq 0$.
    
    When $a \in \mathbb{R}^m$, we take the discrete time points $s=0,1,\ldots,m$ to form $m$ equations.
    Note that $\Re(e^{w_js}) = e^{w_js}$.
    If $w_j$ are distinct, then the Vandermode matrix given by $w_1,\ldots,w_m$ is invertible, indicating that the only solution for $\sum_{j=1}^m \xi_j \Re(e^{w_js}) = 0$ is $\xi_j = 0$ for $j=1,\ldots,m$.
    Thus, $G$ is positive definite in that case.

    When $w \in \mathbb{C}^m$ with distinct imaginary parts, we can always find a scaling factor $\gamma > 0$ such that $e^{\gamma w_1},\ldots,e^{\gamma w_m}, e^{\gamma \Bar{w}_1},\ldots,e^{\gamma \Bar{w}_m}$ are distinct, where $\Bar{w}$ is the conjugate of $w$.
    Then by the argument of Vandermonde matrix, the only solution of the equation $\sum_{j=1}^m \xi_j e^{w_js} + \sum_{j=1}^n \hat{\xi}_j e^{\Bar{w}_js} = 0$ for $s \geq 0$ is that $\xi_j = \hat{\xi}_j = 0$ for $j=1,\ldots,m$.
    Since $2\Re(e^{w_js}) = e^{w_js} + e^{\Bar{w}_js}$, then $\sum_{j=1}^m \xi_j \Re(e^{w_js}) = 0$ only has zero solution.

    Combining these two cases we finish the proof.
\end{proof}

\section{Proof of Theorem \ref{thm: spectrum of G}}\label{proof: thm spectrum of G}

In this section, we prove Theorem \ref{thm: spectrum of G} based on the Gershgorin circle theorem (Lemma \ref{lemma: Gershgorin circle theorem}).

\begin{proof}
    First, we need to bound both the diagonal entry and the off-diagonal sum.
    The diagonal entry $G_{j,j} = \frac{1}{2} (1 + \frac{1}{1+4v_j^2})$, which can be bounded as 
    \begin{equation*}
        \frac{1}{2} \left(1 + \frac{1}{1+4v_j^2}\right) \leq G_{j,j} \leq 1, \quad j = 1,\ldots,m.
    \end{equation*}
    For the off-diagonal sum, we have $\forall j = 1,\ldots, m$,
    \begin{align*}
        2 R_j & = 2 \sum_{k \neq j} |G_{j,k}| \\
        & = \sum_{k \neq j} \frac{1}{1+(v_j-v_k)^2} + \sum_{k \neq j} \frac{1}{1+(v_j+v_k)^2} \\
        & < 
        \sum_{k=1}^\infty \frac{2}{1+\delta^2 k^2} + 
        \sum_{k=1}^\infty \frac{1}{1+(v_j+v_k)^2} - 
        \frac{1}{1+4v_j^2}
        \\
        & < \sum_{k=1}^\infty \frac{2}{1+\delta^2 k^2} + 
        \sum_{k=1}^\infty \frac{1}{1+v_j^2+v_k^2} - 
        \frac{1}{1+4v_j^2}
        \\
        & < \sum_{k=1}^\infty \frac{2}{1+\delta^2 k^2} + \sum_{k=0}^\infty \frac{1}{1+v_j^2+\delta^2 k^2} - 
        \frac{1}{1+4v_j^2}
        \\
        & = 
        \frac{2}{\delta^2} \sum_{k=1}^\infty \frac{1}{1/\delta^2+k^2} + \frac{1}{\delta^2} \sum_{k=1}^\infty \frac{1}{(1+v_j^2)/\delta^2+k^2} + 
        \left(\frac{1}{1+v_j^2} - \frac{1}{1+4v_j^2}\right),
    \end{align*}
    where the first inequality is due to the fact that the minimal separation distance $\min_{j \neq k}|v_j - v_k| \geq \delta$, and the last inequality is because $v_j > 0$ and reordering $\{v_k\}_{k \geq 1}$ does not affect the result for $\sum_{k=1}^\infty \frac{1}{1+v_j^2+v_k^2}$.
    Then by Lemma \ref{lemma: Basel_problem}, we have
    \begin{align*}
        \frac{2}{\delta^2} \sum_{k=1}^\infty \frac{1}{1/\delta^2+k^2} + \frac{1}{\delta^2} \sum_{k=1}^\infty \frac{1}{(1+v_j^2)/\delta^2+k^2} & < \frac{3}{\delta^2}
        \sum_{k=1}^\infty \frac{1}{1/\delta^2+k^2} \\
        & = \frac{3}{\delta^2}
        \left(-\frac{\delta^2}{2} + \frac{\pi \delta}{2} \coth\left(\frac{\pi}{\delta}\right)\right) \\
        & = - \frac{3}{2} + \frac{3\pi}{2\delta} \coth \left(\frac{\pi}{\delta}\right).
    \end{align*}
    Hence we have, 
    \begin{align*}
        G_{j,j} - R_j & > 
        \frac{1}{2} \left(1 + \frac{1}{1+4v_j^2}\right) -
        \frac{1}{2} \left(- \frac{3}{2} + \frac{3\pi}{2\delta} \coth \left(\frac{\pi}{\delta}\right)\right)
        - 
        \frac{1}{2}\left(\frac{1}{1+v_j^2} - \frac{1}{1+4v_j^2}\right) \\
        & > \frac{5}{4} - \frac{1}{2} \max \left(\frac{1}{1+x^2} - \frac{2}{1+4x^2}\right) - \frac{3\pi}{4\delta} \coth \left(\frac{\pi}{\delta}\right) \\
        & > 1.19 - \frac{3\pi}{4\delta} \coth \left(\frac{\pi}{\delta}\right).
    \end{align*}
    Under the same argument, we get 
    \begin{align*}
        G_{j,j} + R_j & < 1 + \frac{1}{2} \left(- \frac{3}{2} + \frac{3\pi}{2\delta} \coth \left(\frac{\pi}{\delta}\right)\right) + \frac{1}{2} \max \left(\frac{1}{1+v_j^2} - \frac{1}{1+4v_j^2}\right) \\
        & < \frac{1}{4} + \frac{3\pi}{4\delta} \coth \left(\frac{\pi}{\delta}\right) + \frac{1}{2} \max \left(\frac{1}{1+v_j^2} - \frac{1}{1+4v_j^2}\right) \\
        & = \frac{5}{12} + \frac{3\pi}{4\delta} \coth \left(\frac{\pi}{\delta}\right).
    \end{align*}
    Combining the two bounds and Lemma \ref{lemma: Gershgorin circle theorem}, we finish the proof.
    
\end{proof}

\end{document}